\def\nset{{\mathbb{N}}}
\def\rset{\mathbb R}
\def\Zset{\mathsf{Z}}
\def\Zsigma{\mathcal{Z}}
\def\rmd{\mathrm{d}}
\def\argmin{\operatorname{Argmin}}
\def\max{\mathrm{max}}
\def\1{\mathbbm{1}}
\def\PP{\mathbb{P}} % proba
\def\PE{\mathbb{E}} % esperance
\def\bPE{\overline{\mathbb{E}}} % esperance
\newcommand{\F}{\mathcal{F}} % filtration
\newcommand{\pscal}[2]{\left\langle#1,#2\right\rangle}
\newcommand{\eqdef}{\ensuremath{\stackrel{\mathrm{def}}{=}}}
\newcommand{\kouter}{k_\mathrm{out}}
\newcommand{\kin}{k_\mathrm{in}}
\newcommand\init{\mathrm{init}}
\newcommand{\R}{\mathsf{R}}
\newcommand{\loss}[1]{\ensuremath{\mathcal{L}_{#1}}}
\newcommand{\bars}{\bar{s}}
\newcommand{\s}{s}
\newcommand{\hatS}{\widehat{S}}
\newcommand{\Smem}{\mathsf{S}}
\newcommand{\param}{\theta}
\newcommand{\Param}{\Theta}
\newcommand{\map}{\mathsf{T}}
\newcommand{\lyap}{\operatorname{W}}
\newcommand{\batch}{\mathcal{B}}
\newcommand{\lbatch}{\mathsf{b}}
\newcommand{\pas}{\gamma}
\newcommand\sequence[3] {\ifthenelse{\equal{#3}{}}{\ensuremath{\{
#1_{#2}\}}}{\ensuremath{\{ #1^{#2}, \eqsp #2 \in #3 \}}}}
\newcommand\sequencedown[3] {\ifthenelse{\equal{#3}{}}{\ensuremath{\{
#1_{#2}\}}}{\ensuremath{\{ #1_{#2}, \eqsp #2 \in #3 \}}}}
\def\eqsp{\;}
\newcommand{\ie}{i.e.}
\newcommand{\ooint}[1]{\left(#1\right)}
\newcommand{\ccint}[1]{\left[#1\right]}
\newtheorem{assumption}{A\hspace{-3pt}}
\newtheorem{theorem}{Theorem}
\newaliascnt{proposition}{theorem}
\newtheorem{proposition}[proposition]{Proposition}
\newaliascnt{lemma}{theorem}
\newtheorem{lemma}[lemma]{Lemma}
\newaliascnt{corollary}{theorem}
\newtheorem{corollary}[corollary]{Corollary}
\newaliascnt{definition}{theorem}
\newaliascnt{remark}{theorem}
\title{GEOM-SPIDER-EM: faster variance reduced stochastic Expectation
  Maximization for nonconvex finite-sum optimization}
\name{Gersende Fort$^{\diamond}$\thanks{Part of this work is funded by the Fondation Simone and Cino Del Duca under the program OpSiMorE} \qquad Eric Moulines$^{\star}$ \qquad Hoi-To Wai$^{\dagger}$}
\address{$^{\diamond}$ Institut Math\'ematique de Toulouse, Universit\'e de Toulouse; CNRS
  UPS, F-31062 Toulouse Cedex, France \\
  $^{\star}$  Centre de Math\'ematiques Appliqu\'ees;
Ecole Polytechnique;
91128 Palaiseau Cedex, France\\
$^{\dagger}$ Department of SEEM;
The Chinese University of Hong Kong;
Shatin, Hong Kong}
\begin{document}

\maketitle

\begin{abstract}
  The Expectation Maximization (EM) algorithm is a key reference for
  inference in latent variable models; unfortunately, its
  computational cost is prohibitive in the large scale learning
  setting. In this paper, we propose an extension of the Stochastic
  Path-Integrated Differential EstimatoR EM (SPIDER-EM) and derive
  complexity bounds for this novel algorithm, designed to solve smooth
  nonconvex finite-sum optimization problems. We show that it reaches
  the same state of the art complexity bounds as SPIDER-EM; and
  provide conditions for a linear rate of convergence. Numerical
  results support our findings.
\end{abstract}

\begin{keywords}
Large scale learning, Latent variable analysis, Expectation Maximization,
Stochastic nonconvex optimization, Variance reduction.
\end{keywords}
\section{Introduction}
\label{sec:intro}
Intelligent processing of large data set and efficient learning of high-dimensional
models require new optimization
algorithms designed to be robust to big data and complex models
era (see
e.g.~\cite{slavakis:giannakis:mateos:2014,buhlmann2016handbook,hardle2018handbook}). This
paper is concerned with stochastic optimization of a nonconvex
finite-sum smooth objective function
\begin{equation}\label{eq:problem}
\argmin_{\param\in \Param}  F(\param), \qquad F(\param) \eqdef \frac{1}{n} \sum_{i=1}^n \loss{i}(\param) +  \R(\param) \eqsp,
\end{equation}
when $\Param \subseteq \rset^d$ and $F$ cannot be explicitly evaluated
(nor its gradient).  Many statistical learning problems can be cast
into this framework, where $n$ is the number of observations or
examples, $\loss{i}$ is a loss function associated to example $\#i$
(most of often a negated log-likelihood), and $\R$ is a penalty term
promoting sparsity, regularity, etc.. Intractability of $F(\param)$
might come from two sources. The first, referred to as {\em large
  scale learning} setting, is that the number $n$ is very large so
that the computations involving a sum over $n$ terms should be either
simply avoided or sparingly used during the run of the optimization
algorithm (see e.g.~\cite{slavakis:mateos:kim:giannakis:2014} for an
introduction to the bridge between large scale learning and stochastic
approximation;
see~\cite{lecun:bernhard:etal:1990,lecun:bottou:bengio:haffner:1998}
for applications to training of deep neural networks for signal and
image processing; and more generally, empirical risk minimization in
machine learning is a matter for \eqref{eq:problem}). The second is
due to the presence of latent variables: for any $i$, the function
$\loss{i}$ as a (high-dimensional) integral over latent variables.
Such a latent variable context is a classical statistical modeling:
for example as a tool for solving inference in mixture
models~\cite{mclachlan:2000}, for the definition of mixed models
capturing variability among examples~\cite{jiang:2007} or for modeling
hidden and/or missing variables (see e.g. applications in text
modeling through latent Dirichlet
allocation~\cite{blei:ng:jordan:2003}, in audio source
separation~\cite{kounades:etal:2016,weisberg:etal:2019}, in
hyper-spectral imaging~\cite{lin:etal:2016}).

In this contribution, we address the two levels of intractability in
the case $\loss{i}$ is of the form
\begin{equation}\label{eq:def:loss}
  \loss{i}(\param) \eqdef - \log \int_\Zset h_i(z) \exp\left(
    \pscal{\s_i(z)}{\phi(\param)} - \psi(\param)\right) \mu(\rmd z) \eqsp.
\end{equation}
This setting in particular covers the case when
$\sum_{i=1}^n \loss{i}(\param)$ is the negated log-likelihood of the
observations $(Y_1, \cdots, Y_n)$, the pairs observation/latent
variable $\{(Y_i,Z_i), i \leq n\}$ are independent, and the
distribution of the complete data $(Y_i, Z_i)$ given by
$(y_i,z) \mapsto h_i(z) \exp\left( \pscal{\s_i(z)}{\phi(\param)} -
  \psi(\param)\right) \mu(\rmd z)$
is from the curved exponential family. Gaussian mixture models are
typical examples, as well as mixtures of distributions from the curved
exponential family.

In the framework \eqref{eq:problem}-\eqref{eq:def:loss}, a
Majorize-Minimization approach through the Expectation-Maximization
(EM) algorithm~\cite{dempster:1977} is standard; unfortunately, the
computational cost of the batch EM can be prohibitive in the large
scale learning setting. Different strategies were proposed to address
this
issue~\cite{Neal:hinton:1998,cappe:moulines:2009,chen:etal:2018,karimi:etal:2019,fort:moulines:wai:2020}:
they combine mini-batches processing, Stochastic Approximation (SA)
techniques~(see
e.g.~\cite{benveniste:priouret:metivier:1990,borkar:2008}) and
variance reduction methods.

The first contribution of this paper is to provide a novel algorithm,
the generalized Stochastic Path-Integrated Differential EstimatoR EM
({\tt g-SPIDER-EM}), which is among the variance reduced stochastic EM
methods for nonconvex finite-sum optimization of the form
\eqref{eq:problem}-\eqref{eq:def:loss}; the generalizations allow a
reduced computational cost without altering the convergence
properties. The second contribution is the proof of complexity bounds,
that is the number of parameter updates (M-step) and the number of conditional
expectations evaluations (E-step), in order to reach
$\epsilon$-approximate stationary points; these bounds are derived for
a specific form of {\tt g-SPIDER-EM}: we show that its complexity bounds are the
same as those of {\tt SPIDER-EM}, bounds which are state of the art
ones and overpass all the previous ones. Linear convergence rates are
proved under a Polyak-{\L}ojasiewicz condition. Finally,
numerical results support our findings and provide insights on how to
implement {\tt g-SPIDER-EM} in order to inherit the properties of {\tt
  SPIDER-EM} while reducing the computational cost.

{\bf Notations} For two vectors $a,b \in \rset^q$, $\pscal{a}{b}$ is
the scalar product, and $\|\cdot \|$ the associated norm. For a matrix
$A$, $A^T$ is its transpose. For a positive integer $n$, set
$[n]^\star \eqdef \{1, \cdots, n\}$ and $[n] \eqdef \{0, \cdots,
n\}$. $\nabla f$ denotes the gradient of a differentiable function
$f$. The minimum of $a$ and $b$ is denoted by $a \wedge b$. Finally,
we use standard big $O$ notation to leave out constants.

\section{EM-based methods in the expectation space}
We begin by formulating the model assumptions:
\begin{assumption} \label{hyp:model} $\Param \subseteq \rset^d$ is a
   convex set.  $(\Zset, \Zsigma)$ is a measurable space and $\mu$ is
   a $\sigma$-finite positive measure on $\Zsigma$. The functions $\R:
   \Param \to \rset$, $\phi : \Param \to \rset^q$, $\psi: \Param \to
   \rset$, $\s_i: \Zset \to \rset^q$, $h_i: \Zset \to \rset_+$ for all
   $i \in [n]^\star$ are measurable. For any $\param \in \Param$ and
   $i \in [n]^\star$, $|\loss{i}(\param)| < \infty$.
\end{assumption}
For any $\param \in \Param$ and $i \in [n]^\star$, define the
posterior density of the latent variable $Z_i$ given the observation
$Y_i$:
\begin{equation}
\label{eq:def:posterior}
p_i(z; \param) \eqdef h_i(z) \exp\left( \pscal{\s_i(z)}{\phi(\param) }
- \psi(\param) + \loss{i}(\param) \right) \eqsp,
\end{equation}
note that the dependence upon $y_i$ follows through the index $i$ in the above.
Set
\begin{equation} \label{eq:bars} \bars_i(\param) \eqdef \int_\Zset
  \s_i(z) \ p_i(z;\param) \mu(\rmd z), \quad \bars(\param) \eqdef
  n^{-1} \sum_{i=1}^n \bars_i(\param) \eqsp.
\end{equation}
\begin{assumption} \label{hyp:bars} The expectations $\bars_i(\param)$
  are well defined for all $\param \in \Param$ and $i \in [n]^\star$.
  For any $s \in \rset^q$, $ \argmin_{\param \in \Param} \ \left(
  \psi(\param) - \pscal{s}{\phi(\param)} + \R(\param) \right)$ is a
  (non empty) singleton denoted by $\{\map(s)\}$.
\end{assumption}
 EM is an iterative algorithm: given a current value $\tau_k \in
 \Param$, the next value is $\tau_{k+1} \leftarrow \map \circ
 \bars(\tau_k)$. It combines an expectation step which boils down to
 the computation of $\bars(\tau_k)$, the conditional expectation of
 $\s(z)$ under $p(\cdot; \tau_k)$; and a maximization step which
 corresponds to the computation of the map $\map$.  Equivalently, by
 using $\map$ which maps $\rset^q$ to $\Param$, it can be described in
 the {\em expectation space}
 (see~\cite{Delyon:lavielle:moulines:1999}): given the current value
 $\bars^k \in \bars(\Param)$, the next value is $\bars^{k+1}
 \leftarrow \bars \circ \map(\bars^k)$. 
 
 In this paper, we see EM as an
 iterative algorithm operating in the expectation space.  In that case, the
 fixed points of the EM operator $\bars \circ \map$ are the roots of
 the function $h$
\begin{equation} \label{eq:field:h} h(s) \eqdef \bars \circ \map(s) -
  s \eqsp.
\end{equation}
 EM possesses a Lyapunov function: in the parameter space, it is the
 objective function $F$ where by definition of the EM sequence, it
 holds $F(\tau_{k+1}) \leq F(\tau_k)$; in the expectation space, it is
 $\lyap \eqdef F \circ \map$, and $\lyap(\bars^{k+1}) \leq
 \lyap(\bars^k)$ holds.  In order to derive complexity bounds, regularity
 assumptions are required on $\lyap$: 
\begin{assumption} \label{hyp:regV} The functions $\phi$, $\psi$ and
  $\R$ are continuously differentiable on $\Param^v$, where $\Param^v$
  is a neighborhood of $\Param$.  $\map$ is continuously
  differentiable on $\rset^q$. The function $F$ is continuously
  differentiable on $\Param^v$ and for any $\param \in \Param$,
  $\nabla F(\param) = - \nabla \phi(\param)^T \bars(\param) + \nabla
  \psi(\param) + \nabla \R(\param)$. For any $s \in \rset^q$, $B(s)
  \eqdef \nabla{\left(\phi \circ \map \right)}(s)$ is a symmetric $q
  \times q$ matrix and there exist $0< v_{min} \leq v_{max}< \infty $
  such that for all $s\in \rset^q$, the spectrum of $B(s)$ is in
  $\ccint{v_{min}, v_{max}}$.  For any $i \in [n]^\star$, $\bars_i
  \circ \map$ is globally Lipschitz on $\rset^q$ with constant $L_i$.
  The function $ s \mapsto \nabla (F \circ \map)(s) = B(s) \left(
  \bars \circ T(s) -s \right)$ is globally Lipschitz on $\rset^q$ with
  constant $L_{\dot \lyap}$.
\end{assumption}
\noindent A\ref{hyp:regV} implies that
 $\lyap$ has globally Lipschitz
 gradient and $\nabla W(s) = -B(s) h(s)$ for some positive definite
 matrix $B(s)$ (see e.g.~\cite[Lemma
   2]{Delyon:lavielle:moulines:1999}; see also \cite[Propositions~1
   and~2]{fort:gach:moulines:2020}). Note that this implies that
 $\nabla W(s^\star) =0$ iff $h(s^\star)=0$.

Unfortunately, in the large scale learning setting (when $n \gg 1$), 
EM can not be easily applied since each iteration involves $n$
conditional expectations (CE) evaluations through $\bars=n^{-1}
\sum_{i=1}^n \bars_i$.  {\em Incremental} EM techniques have been
proposed to address this issue: the most straightforward approach
amounts to use a SA scheme with mean field $h$ since. Upon noting that
$h(s) = \PE\left[\bars_I \circ \map(s)\right]-s$ where $I$ is a
uniform random variable (r.v.) on $[n]^\star$, the fixed points of the
EM operator $\bars \circ \map$ are those of the SA scheme
\begin{equation}
\label{eq:SA:scheme}
\hatS_{k+1} = \hatS_k + \pas_{k+1} \bigg( \lbatch^{-1} \sum_{i \in
    \batch_{k+1}} \bars_{i} \circ \map(\hatS_k) - \hatS_k \bigg)
\end{equation}
where $\{\pas_k, k \geq 0 \}$ is a deterministic positive step size
sequence, and $\batch_{k+1}$ is sampled in $[n]^\star$ independently
from the past of the algorithm. This forms the basis of {\tt
  Online-EM} proposed by \cite{cappe:moulines:2009} (see also
\cite{liang2009online}).  Variance reduced versions were also proposed
and studied: Incremental EM ({\tt
  i-EM})~\cite{Neal:hinton:1998,gunawardana:2005}, Stochastic EM with
variance reduction ({\tt sEM-vr})~\cite{chen:etal:2018}, Fast
Incremental EM \cite{karimi:etal:2019,fort:gach:moulines:2020} ({\tt
  FIEM}) and more recently, Stochastic Path-Integrated Differential
EstimatoR EM ({\tt SPIDER-EM}) \cite{fort:moulines:wai:2020}. 

As shown
in \cite[section~2.3]{fort:gach:moulines:2020}, these algorithms can
be seen as a combination of SA with \emph{control variate}: upon
noting that $h(s) = h(s) + \PE[U]$ for any r.v. $U$ such that $\PE[U]
=0$, {\em control variates within SA} procedures replace
\eqref{eq:SA:scheme} with
\[
\hatS_{k+1} = \hatS_k + \pas_{k+1} \bigg( \lbatch^{-1} \sum_{i \in
    \batch_{k+1}} \bars_{i} \circ \map(\hatS_k) + U_{k+1} - \hatS_k
\bigg)
\]
for a choice of $U_{k+1}$ such that the new algorithm has better
properties (for example, in terms of complexity - see the end of
Section~\ref{sec:gspiderem}).

Lastly, we remark that A\ref{hyp:model}--A\ref{hyp:regV} are common assumptions satisfied by
many statistical models such as the Gaussian Mixture Model; see \cite{fort:moulines:wai:2020} for a rigorous justification of these assumptions. 

\section{The Geom-SPIDER-EM algorithm}
\label{sec:gspiderem}
\vspace{-0.3cm}
\begin{algorithm}[htbp]
  \KwData{ $\kouter \in \nset^\star$; $\hatS_\init \in \rset^q$;
    $\xi_t \in \nset^\star$ for $t \in [\kouter]^\star$; $\pas_{t,0}
    \geq 0$, $\pas_{t,k} >0$ for $t \in [\kouter]^\star$, $k \in
         [\xi_t]^\star$.}  \KwResult{The SPIDER-EM sequence:
    $\{\hatS_{t,k}\}$} {$\hatS_{1,0} = \hatS_{1,-1} = \hatS_\init$ \;
    $\Smem_{1,0} = \bars \circ \map(\hatS_{1,-1}) +
    \mathcal{E}_1$ \label{eq:SA:reset:0}\; \For{$t=1, \cdots,
      \kouter$ \label{eq:SA:epoch}}{ \For{$k=0, \ldots,\xi_t-1$} {
        Sample a mini batch $\batch_{t,k+1}$ of size $\lbatch$ in
        $[n]^\star$ \label{eq:SA:update:batch} \; $\Smem_{t,k+1} =
        \Smem_{t,k} + \lbatch^{-1} \sum_{i \in
          \batch_{t,k+1}} \hspace{-0.15cm} \left( \bars_i \circ
        \map(\hatS_{t,k}) - \bars_i \circ \map(\hatS_{t,k-1})
        \right)$ \label{eq:SA:update:Smem} \; $\hatS_{t,k+1} =
        \hatS_{t,k} + \pas_{t,k+1} \left( \Smem_{t,k+1} - \hatS_{t,k}
        \right)$ \label{eq:SA:updateclassical}} $\hatS_{t+1,-1} =
      \hatS_{t,\xi_t}$ \; $\Smem_{t+1,0} = \bars \circ
      \map(\hatS_{t+1,-1}) + \mathcal{E}_{t+1}$ \label{eq:SA:reset:1}
      \; $\hatS_{t+1,0} = \hatS_{t+1,-1} + \pas_{t+1,0} \left(
      \Smem_{t+1,0} - \hatS_{t+1,-1} \right)$ } }
    \caption{The {\tt g-SPIDER-EM} algorithm. The $\mathcal{E}_t$'s are
      introduced as a perturbation to the computation of $\bars \circ
      \map(\hatS_{t,-1})$; they can be null.\label{algo:SPIDERSA}}
\end{algorithm}
\vspace{-0.3cm} 

The algorithm {\em generalized Stochastic
  Path-Integrated Differential EstimatoR Expectation Maximization}
({\tt g-SPIDER-EM}) described by Algorithm~\ref{algo:SPIDERSA}
uses a new strategy when defining the approximation of $\bars
\circ \map(s)$ at each iteration. It is composed of nested loops:
$\kouter$ outer loops, each of them formed with a possibly random
number of inner loops. Within the $t$th outer loop, {\tt g-SPIDER-EM}
mimics the identity $\bars \circ \map(\hatS_{t,k}) = \bars \circ
\map(\hatS_{t,k-1}) + \{\bars \circ \map(\hatS_{t,k}) - \bars \circ
\map(\hatS_{t,k-1}) \}$.  More precisely, at iteration $k+1$, the
approximation $\Smem_{t,k+1}$ of the full sum $\bars \circ
\map(\hatS_{t,k})$ is the sum of the current approximation
$\Smem_{t,k}$ and of a Monte Carlo approximation of the difference
(see Lines~\ref{eq:SA:update:batch}, \ref{eq:SA:update:Smem}, in
Algorithm~\ref{algo:SPIDERSA}); the examples $i$ in $\batch_{t,k+1}$
used in the approximation of $\bars \circ \map(\hatS_{t,k})$ and those
used for the approximation of $\bars \circ \map(\hatS_{t,k-1})$ are
the same - which make the approximations correlated and favor a
variance reduction when plugged in the SA update
(Line~\ref{eq:SA:updateclassical}). $\batch_{t,k+1}$ is sampled with
or without replacement; even when $\batch_{t,k+1}$ collects
independent examples sampled uniformly in $[n]^\star$, we have
$\PE\left[ \Smem_{t,k+1} \vert \F_{t,k} \right] - \bars \circ
\map(\hatS_{t,k}) = \Smem_{t,k} - \bars \circ \map(\hatS_{t,k-1})$
where $\F_{t,k}$ is the sigma-field collecting the randomness up to
the end of the outer loop $\# t$ and inner loop $\# k$: the
approximation $\Smem_{t,k+1}$ of $\bars \circ \map(\hatS_{t,k})$ is
biased - a property which makes the theoretical analysis of the
algorithm challenging. This approximation is reset (see
Lines~\ref{eq:SA:reset:0},\ref{eq:SA:reset:1}) at the end of an outer
loop: in the "standard" {\tt SPIDER-EM}, $\Smem_{t,0} = \bars \circ
\map(\hatS_{t,-1})$ is computed, but this "refresh" can be only
partial, by computing an update on a (large) batch $\tilde
\batch_{t,0}$ (size $\tilde \lbatch_t$) of observations: $\Smem_{t,0}
= {\tilde \lbatch}_t^{-1} \sum_{i \in \tilde \batch_{t,0}} \bars_i
\circ \map(\hatS_{t,-1})$.  Such a reset starts a so-called {\em
  epoch} (see Line~\ref{eq:SA:epoch}). The number of inner loops
$\xi_t$ at epoch $\# t$ can be deterministic $\xi_t$; or random, such
as a uniform distribution on $[\kin]^\star$ or a geometric
distribution, and drawn prior the run of the algorithm.

Comparing {\tt g-SPIDER-EM} with {\tt
SPIDER-EM}~\cite{fort:moulines:wai:2020}, we notice that the former allows 
a perturbation $\mathcal{E}_t$ when initializing $\Smem_{t,0}$. This is important
for computational cost reduction. Moreover, {\tt g-SPIDER-EM} considers epochs with
time-varying length $\xi_t$ which covers situations when it is random
and chosen independently of the other sources of randomness (the
errors $\mathcal{E}_t$, the batches $\batch_{t,k+1}$).  Hereafter, we
provide an original analysis of an {\tt g-SPIDER-EM}, namely {\tt
  Geom-SPIDER-EM} which corresponds to the case $\xi_t \leftarrow
\Xi_t$, $\Xi_t$ being a geometric r.v. on $\nset^\star$ with success
probability $1-\rho_t \in \ooint{0,1}$: $\PP(\Xi_t = k) = (1-\rho_t)
\rho_t^{k-1}$ for $k \geq 1$ (hereafter, we will write $\Xi_t \sim
\mathcal{G}^\star(1-{\rho_t})$). Since $\Xi_t$ is also the first
success distribution in a sequence of independent Bernoulli trials,
the geometric length could be replaced with: \textit{(i)} at each
iteration $k$ of epoch $t$, sample a Bernoulli r.v. with a probability
of success $(1-\rho_t)$; \textit{(ii)} when the coin comes up head,
start a new epoch (see ~\cite{PAGE:2020,geomSARAH-2020} for similar
ideas on stochastic gradient algorithms).

Let us establish complexity bounds for {\tt
  Geom-SPIDER-EM}. We analyze a randomized terminating iteration
$\Xi^\star$ \cite{ghadimi:lan:2013} and discuss how to choose $\kouter, \lbatch$ and $\xi_1,
\cdots, \xi_{\kouter}$ as a function of the batch size $n$ and an
accuracy $\epsilon >0$ to reach $\epsilon$-approximate
stationarity i.e. $\PE [\|h(\hatS_{\Xi^\star})\|^2 ] \leq \epsilon$.
To this end, we endow the probability space $(\Omega, \mathcal{A}, \PP)$ with
 the sigma-fields $\F_{1,0} \eqdef \sigma( \mathcal{E}_1)$, $\F_{t,0}
 \eqdef \sigma(\F_{t-1, \xi_t}, \mathcal{E}_t)$ for $t \geq 2$, and
 $\F_{t,k+1} \eqdef \sigma(\F_{t,k}, \batch_{t,k+1})$ for $t \in
        [\kouter]^\star, k \in [\xi_t-1]$. For a r.v. $\Xi_t \sim
        \mathcal{G}^\star(1-\rho_t)$, set $\bPE_t[\phi(\Xi_t) \vert
          \F_{t,0}] \eqdef (1-\rho_t) \sum_{k \geq 1} \rho^{k-1}_t
        \PE[\phi(k) \vert \F_{t,0}]$ for any bounded measurable
        function $\phi$.
\vspace{-0.2cm}
\begin{theorem}
  \label{theo:main}
  Assume A\ref{hyp:model} to A\ref{hyp:regV}. For any $t \in
  [\kouter]^\star$, let $\rho_{t} \in \ooint{0,1}$ and $\Xi_t \sim
  \mathcal{G}^\star(1-{\rho_t})$. Run Algorithm~\ref{algo:SPIDERSA}
  with $\pas_{t,k+1}= \pas_t>0$ and $\xi_t \leftarrow \Xi_t$ for any
  $t \in [\kouter]^\star$, $k \geq 0$. Then, for any $t \in
  [\kouter]^\star$, \begin{align*} & \frac{v_{\min}
      \pas_t}{2(1-\rho_t)} \bPE_t\left[ \|h(\hatS_{t,\Xi_t-1})\|^2
      \vert \F_{t,0} \right] \\ & \leq \lyap(\hatS_{t,0}) -
    \bPE_t\left[\lyap(\hatS_{t,\Xi_t}) \vert \F_{t,0} \right] +
    \frac{v_\max \pas_{t} }{2(1-\rho_t)}  \|\mathcal{E}_t\|^2 \\ &+
    \frac{v_\max \pas_{t} \pas_{t,0}^2  }{2(1-\rho_t)} \frac{L^2 }{\lbatch} 
    \| \Delta \hatS_{t,0} \|^2  + \mathcal{N}_t \,
    \bPE_t\left[ \| \Delta \hatS_{t,\Xi} \|^2 \vert
      \F_{t,0} \right] \eqsp;
  \end{align*}
  where $\Delta \hatS_{t,\xi} \eqdef \Smem_{t,\xi} - \hatS_{t,\xi-1}$, $L^2 \eqdef n^{-1} \sum_{i=1}^n L_i$, and
  \[
\mathcal{N}_t \eqdef - \frac{\pas_{t}}{2(1-\rho_t)} \left( v_{\min} -
\pas_{t} L_{\dot \lyap} - \frac{v_\max L^2 \rho_t }{(1-\rho_t)
  \lbatch}\pas_t^2\right) \eqsp.
  \]
 \end{theorem}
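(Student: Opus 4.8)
The plan is to establish a one-step descent for the Lyapunov function $\lyap$ along the inner loop, to isolate the \emph{systematic bias} of the SPIDER memory variable as a martingale, and then to collapse the whole random-length inner loop into the stated bound using the closed-form moments of $\Xi_t$. To this end I would first introduce the memory error $\eta_{t,k} \eqdef \Smem_{t,k} - \bars\circ\map(\hatS_{t,k-1})$. The reset (Lines~\ref{eq:SA:reset:0},\ref{eq:SA:reset:1}) gives $\eta_{t,0} = \mathcal{E}_t$, and Lines~\ref{eq:SA:update:Smem}--\ref{eq:SA:updateclassical} show that the update direction splits as $\Delta\hatS_{t,k+1} = h_{t,k} + \eta_{t,k+1}$, where I abbreviate $h_{t,k}\eqdef h(\hatS_{t,k})$, and that $\eta$ obeys $\eta_{t,k+1} = \eta_{t,k} + \zeta_{t,k+1}$ with $\zeta_{t,k+1}\eqdef \lbatch^{-1}\sum_{i\in\batch_{t,k+1}}\delta_i - n^{-1}\sum_{i=1}^n\delta_i$ and $\delta_i\eqdef \bars_i\circ\map(\hatS_{t,k}) - \bars_i\circ\map(\hatS_{t,k-1})$. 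Since $\batch_{t,k+1}$ is drawn independently of $\F_{t,k}$, one has $\PE[\zeta_{t,k+1}\vert\F_{t,k}] = 0$, so $\eta$ is an $\F_{t,\cdot}$-martingale, and the per-example Lipschitz constants $L_i$ bound the increment by $\PE[\|\zeta_{t,k+1}\|^2\vert\F_{t,k}] \leq (L^2/\lbatch)\,\pas_{t,k}^2\,\|\Delta\hatS_{t,k}\|^2$, using $\hatS_{t,k}-\hatS_{t,k-1} = \pas_{t,k}\Delta\hatS_{t,k}$. Summing the orthogonal increments yields $\PE[\|\eta_{t,k+1}\|^2\vert\F_{t,0}] \leq \|\mathcal{E}_t\|^2 + (L^2/\lbatch)\sum_{j=0}^{k}\pas_{t,j}^2\,\PE[\|\Delta\hatS_{t,j}\|^2\vert\F_{t,0}]$, in which the $j=0$ term carries the reset step $\pas_{t,0}$ and the deterministic direction $\Delta\hatS_{t,0}$.

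Next I would apply the descent lemma. A\ref{hyp:regV} makes $\lyap$ have a globally Lipschitz gradient with $\nabla\lyap(s) = -B(s)h(s)$ and $v_{\min}\Id \preceq B(s)\preceq v_\max\Id$; writing $B_{t,k}\eqdef B(\hatS_{t,k})$ and $\hatS_{t,k+1}-\hatS_{t,k} = \pas_t\Delta\hatS_{t,k+1}$, this gives $\lyap(\hatS_{t,k+1}) \leq \lyap(\hatS_{t,k}) - \pas_t\pscal{B_{t,k}h_{t,k}}{\Delta\hatS_{t,k+1}} + (L_{\dot\lyap}\pas_t^2/2)\|\Delta\hatS_{t,k+1}\|^2$. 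The key move is to expand the inner product by the polarization identity $2\pscal{B_{t,k}h_{t,k}}{\Delta\hatS_{t,k+1}} = h_{t,k}^T B_{t,k} h_{t,k} + \Delta\hatS_{t,k+1}^T B_{t,k}\Delta\hatS_{t,k+1} - \eta_{t,k+1}^T B_{t,k}\eta_{t,k+1}$, valid since $\eta_{t,k+1} = \Delta\hatS_{t,k+1} - h_{t,k}$, and to bound the three quadratic forms by $v_{\min}$ from below and $v_\max$ from above. This produces in one stroke both a $\|h_{t,k}\|^2$ and a $\|\Delta\hatS_{t,k+1}\|^2$ term, at the sole cost of the bias, giving the per-step inequality $\tfrac{\pas_t v_{\min}}{2}\|h_{t,k}\|^2 + \tfrac{\pas_t}{2}(v_{\min}-\pas_t L_{\dot\lyap})\|\Delta\hatS_{t,k+1}\|^2 \leq \lyap(\hatS_{t,k}) - \lyap(\hatS_{t,k+1}) + \tfrac{\pas_t v_\max}{2}\|\eta_{t,k+1}\|^2$.

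Finally I would collapse the inner loop with the geometric moments of $\Xi_t\sim\mathcal{G}^\star(1-\rho_t)$, for which $\PP(\Xi_t>k)=\rho_t^k$ and hence $\bPE_t[\sum_{k=0}^{\Xi_t-1}a_k\vert\F_{t,0}] = \sum_{k\geq 0}\rho_t^k\PE[a_k\vert\F_{t,0}]$, $\bPE_t[a_{\Xi_t-1}\vert\F_{t,0}] = (1-\rho_t)\sum_{k\geq 0}\rho_t^k\PE[a_k\vert\F_{t,0}]$, and $\bPE_t[a_{\Xi_t}\vert\F_{t,0}] = \tfrac{1-\rho_t}{\rho_t}\sum_{k\geq 1}\rho_t^k\PE[a_k\vert\F_{t,0}]$. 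Summing the per-step bound against $\rho_t^k$, the Lyapunov terms telescope into $\lyap(\hatS_{t,0})-\bPE_t[\lyap(\hatS_{t,\Xi_t})\vert\F_{t,0}]$; the first identity turns the $\|h_{t,k}\|^2$ sum into $\tfrac{v_{\min}\pas_t}{2(1-\rho_t)}\bPE_t[\|h_{t,\Xi_t-1}\|^2\vert\F_{t,0}]$, the stated left-hand side, and the $\|\Delta\hatS_{t,k+1}\|^2$ and $\|\eta_{t,k+1}\|^2$ sums become multiples of $\bPE_t[\|\Delta\hatS_{t,\Xi_t}\|^2\vert\F_{t,0}]$ after feeding in the variance bound and exchanging the order of the resulting double sum via $\sum_{k\geq j}\rho_t^k = \rho_t^j/(1-\rho_t)$. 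Collecting, the $(1-\rho_t)^{-1}$ prefactors on $\|\mathcal{E}_t\|^2$ and on the boundary $\pas_{t,0}^2\|\Delta\hatS_{t,0}\|^2$ term appear exactly as stated, while the coefficient of $\bPE_t[\|\Delta\hatS_{t,\Xi_t}\|^2\vert\F_{t,0}]$ becomes $\tfrac{\pas_t}{2(1-\rho_t)}\big(v_{\min}-\pas_t L_{\dot\lyap} - \tfrac{v_\max L^2\rho_t}{(1-\rho_t)\lbatch}\pas_t^2\big) = -\mathcal{N}_t$, so moving it to the right reproduces the $\mathcal{N}_t$ term.

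The main obstacle is the bias: because the shared-minibatch update makes $\Smem_{t,k+1}$ a \emph{biased} estimate of $\bars\circ\map(\hatS_{t,k})$, the analysis cannot treat the increment as a mean field plus centered noise in the usual SA fashion. What saves the argument is that the bias $\eta$ is itself a martingale, so the polarization identity isolates it as the single quadratic penalty $\eta_{t,k+1}^T B_{t,k}\eta_{t,k+1}$, whose accumulated second moment the geometric weights convert—through the factor-of-$\rho_t$ index shifts hidden in the three identities above—into exactly the $\rho_t/(1-\rho_t)$-weighted direction term that merges into $\mathcal{N}_t$. The delicate part is therefore the constant bookkeeping: keeping the boundary $k=0$ contribution (carrying $\pas_{t,0}$ and $\Delta\hatS_{t,0}$) separate from the $k\geq 1$ contributions (carrying $\pas_t$), and tracking the index shifts so the error coefficient lands precisely on $\mathcal{N}_t$ rather than on a nearby constant.
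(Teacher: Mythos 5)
Your proposal is correct and follows essentially the same route as the paper's proof: the bias/martingale decomposition of $\Smem_{t,k}-\bars\circ\map(\hatS_{t,k-1})$ with the per-example Lipschitz variance bound, the descent lemma combined with the polarization identity for $B(s)$ and its spectrum bounds, and the collapse of the random-length inner loop via the geometric distribution. The only (cosmetic) difference is that you sum the per-step inequalities against the weights $\rho_t^k$ and swap summation order, whereas the paper applies the shift identity $\bPE_t[D_{\Xi-1}]=\rho_t\bPE_t[D_\Xi]+(1-\rho_t)D_0$ to the recursions evaluated at the random index; these are the same computation.
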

\noindent Theorem~\ref{theo:main} is the key result from which our conclusions
 are drawn; its proof is adapted from \cite[section
   8]{fort:moulines:wai:2020} (also see~\cite{fort:moulines:wai:icassp:supp}).

 Let us discuss the rate of convergence and the complexity of {\tt
 Geom-SPIDER-SA} in the case: for any $t \in [\kouter]^\star$, the
 mean number of inner loops is $(1-\rho_t)^{-1} = \kin$, $\pas_{t,0} =
 0$ and $\pas_t = \alpha/L$ for $\alpha > 0$ satisfying
\[
v_{\min} - \alpha \frac{L_{\dot  \lyap}}{L} - \alpha^2 v_\max
\frac{\kin}{\lbatch} \left(1 - \frac{1}{\kin} \right) >0 \eqsp.
\] {\bf Linear rate.} When
$\Xi \sim \mathcal{G}^\star(1-{\rho})$, we have
\begin{equation} \label{eq:key:geom} \rho \PE\left[D_\Xi \right] \leq \rho \PE\left[D_\Xi \right]
  +(1-\rho) D_0 = \PE\left[D_{\Xi-1} \right]
\end{equation}
for any positive sequence $\{D_k, k \geq 0\}$, Theorem~\ref{theo:main}
implies
\begin{multline}
  \bPE_t\left[ \|h(\hatS_{t,\Xi_t})\|^2 \vert \F_{t,0} \right] \leq
  \frac{2L}{v_{\min} \alpha (\kin-1)} \left(\lyap(\hatS_{t,0}) - \min
    \lyap \right)  \\
  + \frac{v_\max}{v_{\min}} \frac{\kin}{\kin-1} \|\mathcal{E}_t\|^2
  \eqsp. \label{eq:prepareLR}
\end{multline}
Hence, when $\|\mathcal{E}_t\| = 0$ and $\lyap$ satisfies a
Polyak-{\L}ojasiewicz condition \cite{karimi2016linear}, \ie\ 
\begin{equation}\label{eq:PL}
\exists \tau>0, \forall s \in \rset^q, \qquad \lyap(s) - \min \lyap \leq \tau \|
\nabla \lyap(s) \|^2
\end{equation}
then \eqref{eq:prepareLR} yields
\[
\mathcal{H}_t \eqdef \bPE_t\left[ \|h(\hatS_{t,\Xi_t})\|^2 \vert
  \F_{t,0} \right] \leq \frac{2L\tau v_\max^2 }{v_{\min} \alpha
  (\kin-1)} \| h(\hatS_{t-1, \xi_{t-1}}) \|^2 \eqsp,
\]
thus establishing a linear rate of the algorithm along the path
$\{\hatS_{t,\Xi_t}, t \in [\kouter]^\star \}$ as soon as $\kin$ is
large enough:
\[
 \PE\left[\mathcal{H}_t \right] \leq \left( \frac{2L \tau v_\max^2
 }{v_{\min} \alpha (\kin-1)}\right)^t \| h(\hatS_{\init}) \|^2.
\]
Even if the Polyak-{\L}ojasiewicz condition \eqref{eq:PL} is quite
restrictive, the above discussion gives the intuition of the {\em
  lock-in} phenomenon which often happens at convergence: a linear
rate of convergence is observed when the path is trapped in a
neighborhood of its limiting point, which may be the consequence that
locally, the Polyak-{\L}ojasiewicz condition holds (see figure~\ref{fig:1} in
Section~\ref{sec:numerical}).\vspace{.1cm}

\noindent {\bf Complexity for $\epsilon$-approximate stationarity.}
From Theorem~\ref{theo:main}, Eq.~\eqref{eq:key:geom} and $\hatS_{t,
  \Xi_t} = \hatS_{t+1,0}$ (here $\pas_{t,0} = 0$), it holds
\[
\frac{v_{\min} \alpha (\kin-1) }{2L} \PE\left[ \mathcal{H}_t\right]
\leq \PE\left[ \lyap(\hatS_{t,0}) - \lyap(\hatS_{t+1,0}) \right]
\eqsp.
\]
Therefore,
\begin{equation}\label{eq:random:termination}
\frac{1}{\kouter} \sum_{t=1}^{\kouter} \PE\left[\mathcal{H}_t\right] \leq \frac{2L\left(
  \lyap(\hatS_\init) - \min \lyap \right)}{v_{\min} \alpha (\kin-1)
  \kouter} \eqsp.
\end{equation}
Eq.~\eqref{eq:random:termination} establishes that in order to obtain
an $\epsilon$-approximate stationary point, it is sufficient to stop
the algorithm at the end of the epoch $\# T$, where $T$ is sampled
uniformly in $[\kouter]^\star$ with
$\kouter = O( \epsilon^{-1} L/\kin)$ - and return
$\hatS_{T,\xi_T}$. To do such, the mean number of conditional
expectations evaluations is
$\mathcal{K}_{\mathrm{CE}} \eqdef n+n \kouter+ 2 \lbatch \kin
\kouter$;
and the mean number of optimization steps is
$\mathcal{K}_{\mathrm{Opt}} \eqdef \kouter + \kin \kouter$. By
choosing $\kin = O(\sqrt{n})$ and $\lbatch = O(\sqrt{n})$, we have
$\mathcal{K}_{\mathrm{CE}} = O(L \sqrt{n} \epsilon^{-1})$ and
$\mathcal{K}_{\mathrm{Opt}} = O(L \epsilon^{-1})$.  Similar randomized
terminating strategies were proposed in the literature: their optimal
complexity in terms of conditional expectations evaluations is
$O(\epsilon^{-2})$ for {\tt Online-EM}~\cite{cappe:moulines:2009},
$O(\epsilon^{-1} n)$ for {\tt i-EM}~\cite{Neal:hinton:1998},
$O(\epsilon^{-1} n^{2/3})$ for {\tt
  sEM-vr}~\cite{chen:etal:2018,karimi:etal:2019},
$O(\{\epsilon^{-1} n^{2/3}\} \wedge \{\epsilon^{-3/2} \sqrt{n} \})$
for {\tt FIEM}~\cite{karimi:etal:2019,fort:gach:moulines:2020} and
$O(\epsilon^{-1} \sqrt{n})$ for {\tt SPIDER-EM} - see \cite[section
6]{fort:moulines:wai:2020} for a comparison of the complexities
$\mathcal{K}_{\mathrm{CE}}$ and $\mathcal{K}_{\mathrm{opt}}$ of these
incremental EM algorithms. Hence, {\tt Geom-SPIDER-EM} has the same
complexity bounds as {\tt SPIDER-EM}, and they are optimal among the
class of incremental EM algorithms.

\section{Numerical illustration}
\label{sec:numerical}
We perform experiments on the MNIST dataset, which consists of
$n=6 \times 10^4$ images of handwritten digits, each with $784$ pixels. We
pre-process the datas as detailed in \cite[section
5]{fort:gach:moulines:2020}: $67$ uninformative pixels are removed
from each image, and then a principal component analysis is applied to
further reduce the dimension; we keep the $20$ principal components of
each observation. The learning problem consists in fitting a Gaussian
mixture model with $g=12$ components: $\param$ collects the weights of
the mixture, the expectations of the components (i.e. $g$ vectors in
$\rset^{20}$) and a full  $20 \times 20$ covariance matrix;
here, $\R = 0$ (no penalty term). All the algorithms start from an
initial value $\hatS_\init = \bars \circ \map(\param_\init)$ such that
$-F(\param_\init) = - 58.3$, and their first two epochs are {\tt
  Online-EM}. The first epoch with a variance reduction technique is
epoch $\# 3$; on Fig.~\ref{fig:1}, the plot starts at epoch $\# 2$.

The proposed {\tt Geom-SPIDER-EM} is run with a constant step size
$\pas_{t,k} = 0.01$ (and $\pas_{t,0} =0$); $\kouter =148$ epochs
(which are preceded with $2$ epochs of {\tt Online-EM}); a mini batch
size $\lbatch = \sqrt{n}$. Different strategies are considered for the
initialization $\Smem_{t,0}$ and the parameter of the geometric
r.v. $\Xi_t$.  In {\tt full-geom}, $\kin = \sqrt{n}/2$ so that the
mean total number of conditional expectations evaluations per outer
loop is $2 \lbatch \kin =n$; and $\mathcal{E}_t =0$ which means that
$\Smem_{t,0}$ requires the computation of the full sum $\bars$ over
$n$ terms. In {\tt half-geom}, $\kin$ is defined as in {\tt
  full-geom}, but for all $t \in [\kouter]^\star$,
$\Smem_{t,0} = (2/n) \sum_{i \in \tilde \batch_{t,0}} \bars_i \circ
\map(\hatS_{t,-1})$
where $\batch_{t,0}$ is of cardinality $n/2$; therefore
$\mathcal{E}_t \neq 0$.  In {\tt quad-geom}, a quadratic growth is
considered both for the mean of the geometric random variables:
$\PE\left[\xi_t\right] =\mathrm{min}(n, \mathrm{max}(20 t^2,
n/50))/(2\lbatch)$;
and for the size of the mini batch when computing $\Smem_{t,0}$:
$\Smem_{t,0}=\tilde \lbatch_{t}^{-1} \sum_{i \in \tilde \batch_{t}}
\bars_i \circ \map(\hatS_{t,-1})$
with $\tilde \lbatch_{t} = \mathrm{min}(n, \mathrm{max}(20 t^2, n/50))$.
The {\tt g-SPIDER-EM} with a constant number of inner loops
$\xi_t = \kin = n/(2 \lbatch)$ is also run for comparison: different
strategies for $\Smem_{t,0}$ are considered, the same as above (it
corresponds to {\tt full-ctt}, {\tt half-ctt} and {\tt quad-ctt} on
the plots). Finally, in order to illustrate the benefit of the variance
reduction, a pure {\tt Online-EM} is run for $150$ epochs, one epoch
corresponding to $\sqrt{n}$ updates of the statistics $\hatS$, each of
them requiring a mini batch $\batch_{k+1}$ of size $\sqrt{n}$ (see
Eq.\eqref{eq:SA:scheme}).

The algorithms are compared through an estimation of the quantile of order $0.5$ of
$\|h(\hatS_{t,\Xi_t})\|^2$ over $30$
independent realizations. It is plotted versus the number of epochs
$t$ in Fig.~\ref{fig:1} and the number of conditional expectations (CE)
evaluations in Fig.~\ref{fig:2}. They are also compared through the
objective function $F$ along the path; the mean value over $30$
independent paths is displayed versus the number of CE, see Fig.~\ref{fig:3}.  

We first observe that {\tt Online-EM} has a poor convergence rate, thus
justifying the interest of variance reduction techniques as shown in
Fig.~\ref{fig:1}. Having a persistent bias along iterations when defining
$\Smem_{t,0}$ i.e.  considering $\tilde \lbatch_{t} \neq n$ and
therefore $\mathcal{E}_t \neq 0$, is also a bad strategy as seen in Fig.~\ref{fig:1}, \ref{fig:2}
for {\tt half-ctt} and {\tt half-geom}. For the four other
{\tt SPIDER-EM} strategies, we observe a linear convergence rate in Fig.~\ref{fig:1}, \ref{fig:2}. 
The best strategy, both in terms of CE evaluations and
in terms of efficiency given a number of epochs, is {\tt quad-ctt}: a
constant and deterministic number of inner loops $\xi_t$ combined with
an increasing accuracy when computing $\Smem_{t,0}$; therefore, during
the first iterations, it is better to reduce the computational cost
of the algorithm by considering $\tilde \lbatch_{t} \ll n$. When
$\mathcal{E}_t =0$ (i.e. $\tilde \lbatch_{t} =n$ so the computational
cost of $\Smem_{t,0}$ is maximal), it is possible to reduce the total
CE computational cost of the algorithm by considering a random number
of inner loops (see {\tt full-geom} and {\tt full-ctt} on
Fig.~\ref{fig:1}, \ref{fig:2}). Finally, the strategy which consists in increasing both
$\tilde \lbatch_{t}$ and the number of inner loops, does not look the
best one (see {\tt quad-ctt} and {\tt quad-geom} on Fig.~\ref{fig:1} to Fig.~\ref{fig:3}).

\begin{figure}[t]
  \centering
  {\includegraphics[width=7cm]{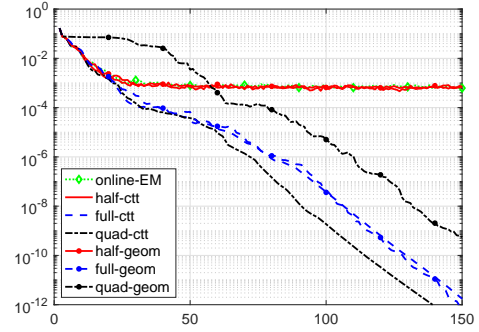}}\vspace{-.2cm}
  \caption{Quantile $0.5$ of $\|h(\hatS_{t,\Xi_t})\|^2$ vs the number of epochs} \label{fig:1}
\end{figure}

\begin{figure}[t]
  \centering
  \centerline{\includegraphics[width=7cm]{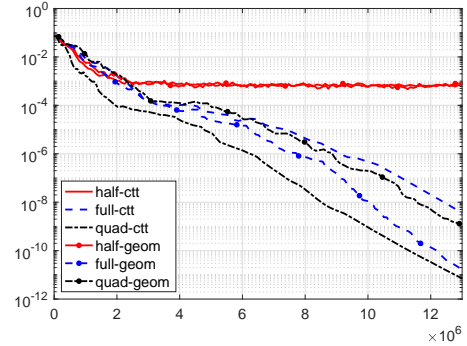}}\vspace{-.2cm}
  \caption{Quantile $0.5$ of $\|h(\hatS_{t,\Xi_t})\|^2$ vs the
    number of CE evaluations} \label{fig:2}
\end{figure}

\begin{figure}[b]
  \centering
  {\includegraphics[width=4cm]{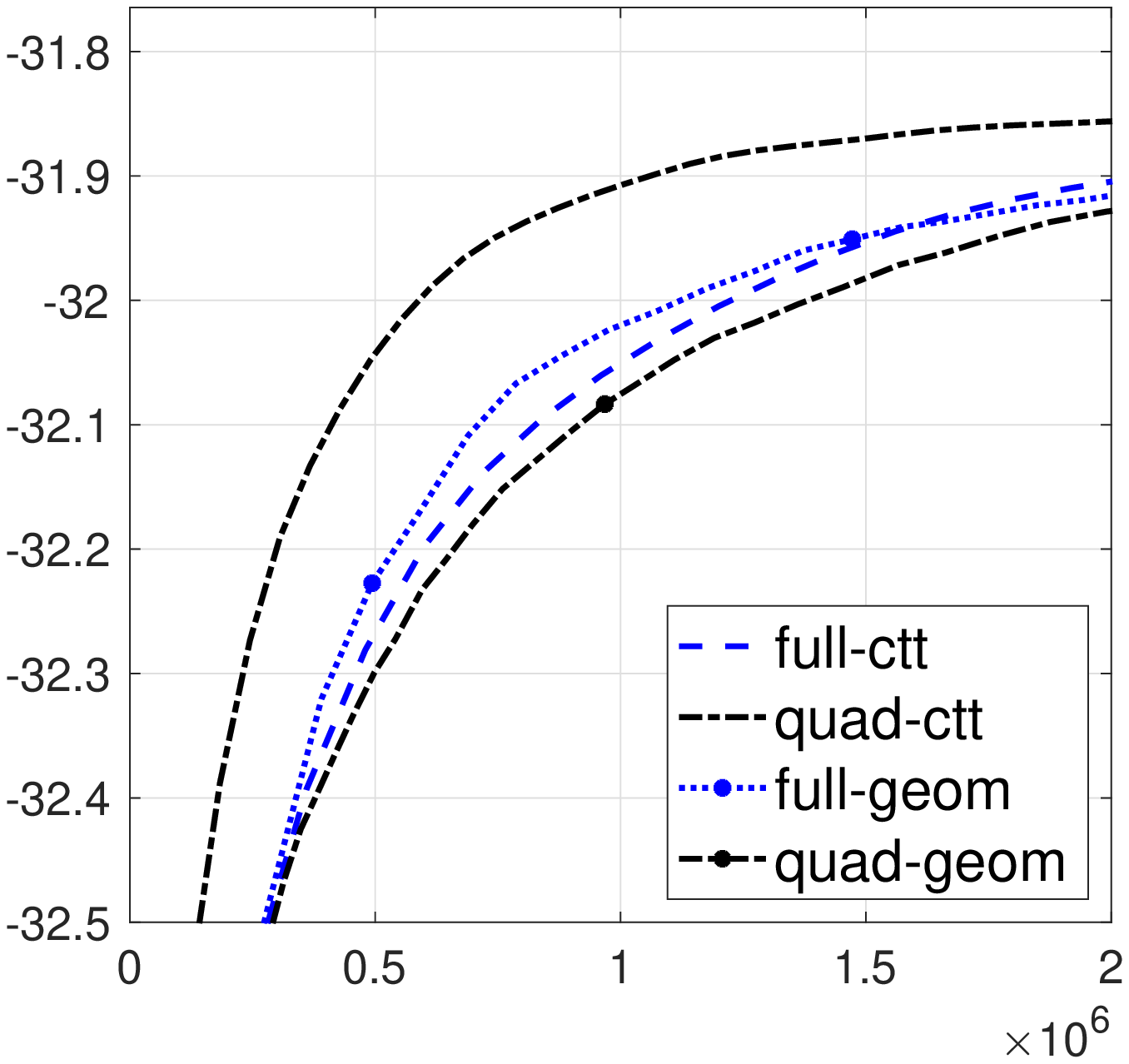}}
  {\includegraphics[width=4cm]{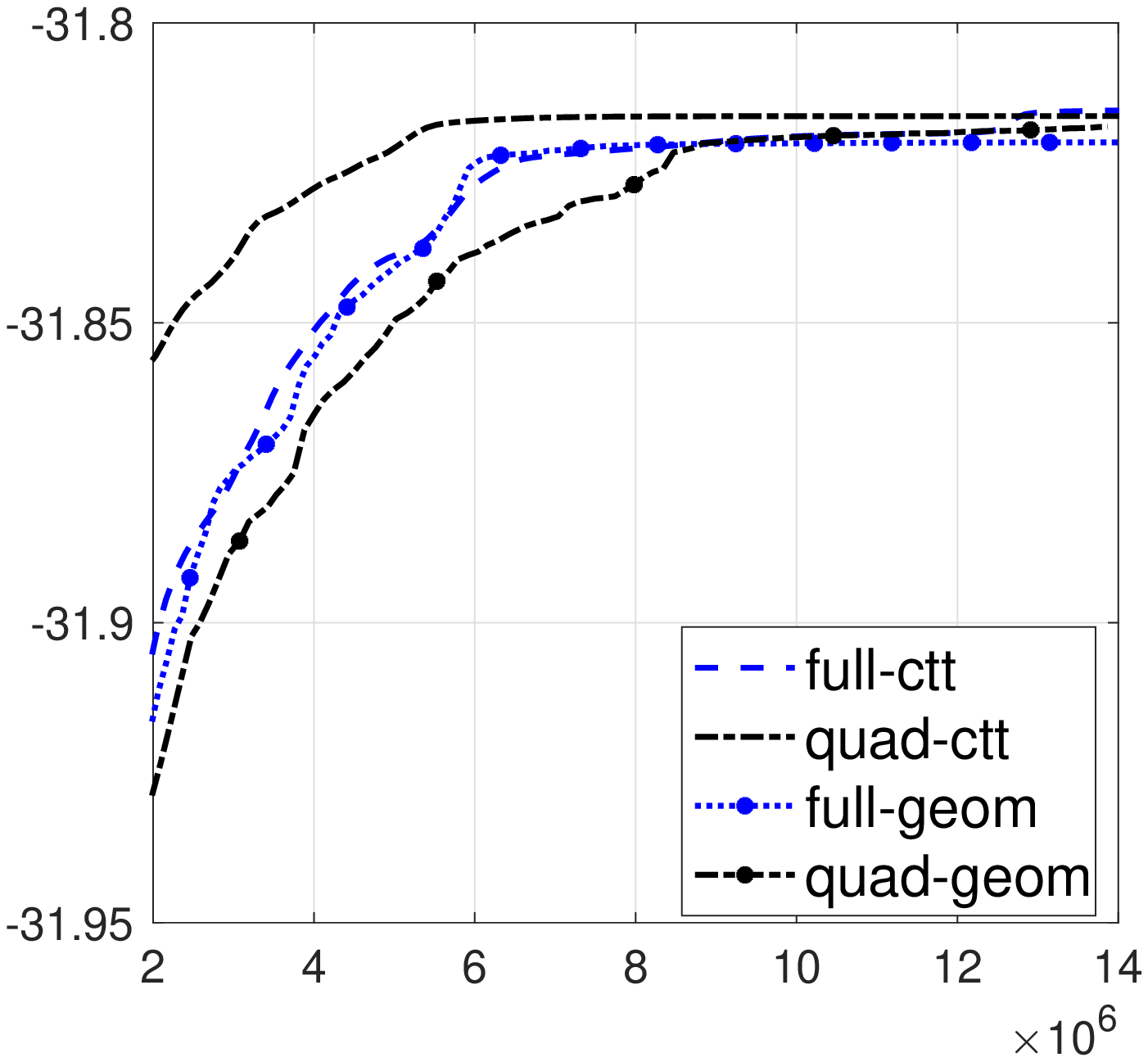}}
  \caption{(Left) $-F$ vs CE, until $2e6$. (Right) $-F$ vs CE, after $2e6$.} \label{fig:3}
\end{figure}

\vfill
\pagebreak
%% \bibliographystyle{IEEEbib}
%% \bibliography{icasspbiblio}

\clearpage
\newpage

\begin{center}
 {\bf \large Supplementary material,  \\
paper ``GEOM-SPIDER-EM: faster variance reduced stochastic Expectation
Maximization for nonconvex finite-sum optimization''}
\end{center}

\section{Proof of Theorem 1}
 Let $\{\mathcal{E}_t, t \in [\kouter]^\star\}$ and $\{\batch_{t,k+1},
 t \in [\kouter]^\star, k \in [\xi_t-1]\}$ be random variables defined
 on the probability space $(\Omega, \mathcal{A}, \PP)$.  Define the
 filtrations $\F_{1,0} \eqdef \sigma( \mathcal{E}_1)$, $\F_{t,0}
 \eqdef \sigma(\F_{t-1, \xi_t}, \mathcal{E}_t)$ for $t \geq 2$, and
 $\F_{t,k+1} \eqdef \sigma(\F_{t,k}, \batch_{t,k+1})$ for $t \in
        [\kouter]^\star, k \in [\xi_t-1]$.

For $\rho_{t} \in \ooint{0,1}$, set
\[
\bPE_t[\phi(\Xi_t) \vert \F_{t,0}] \eqdef (1-\rho_t) \sum_{k \geq 1}
\rho^{k-1}_t \PE[\phi(k) \vert
\F_{t,0}] \eqsp,
\]
for any measurable positive function $\phi$.

\begin{assumption} \label{hyp:model} $\Param \subseteq \rset^d$ is a
   convex set.  $(\Zset, \Zsigma)$ is a measurable space and $\mu$ is
   a $\sigma$-finite positive measure on $\Zsigma$. The functions $\R:
   \Param \to \rset$, $\phi : \Param \to \rset^q$, $\psi: \Param \to
   \rset$, $\s_i: \Zset \to \rset^q$, $h_i: \Zset \to \rset_+$ for all
   $i \in [n]^\star$ are measurable. For any $\param \in \Param$ and
   $i \in [n]^\star$, $|\loss{i}(\param)| < \infty$.
\end{assumption}

\begin{assumption} \label{hyp:bars} The expectations $\bars_i(\param)$
  are well defined for all $\param \in \Param$ and $i \in [n]^\star$.
  For any $s \in \rset^q$, $ \argmin_{\param \in \Param} \ \left(
  \psi(\param) - \pscal{s}{\phi(\param)} + \R(\param) \right)$ is a
  (non empty) singleton denoted by $\{\map(s)\}$.
\end{assumption}

\begin{assumption} \label{hyp:regV} The functions $\phi$, $\psi$ and
  $\R$ are continuously differentiable on $\Param^v$, where $\Param^v$
  is a neighborhood of $\Param$.  $\map$ is continuously
  differentiable on $\rset^q$. The function $F$ is continuously
  differentiable on $\Param^v$ and for any $\param \in \Param$,
  $\nabla F(\param) = - \nabla \phi(\param)^T \bars(\param) + \nabla
  \psi(\param) + \nabla \R(\param)$. For any $s \in \rset^q$, $B(s)
  \eqdef \nabla{\left(\phi \circ \map \right)}(s)$ is a symmetric $q
  \times q$ matrix and there exist $0< v_{min} \leq v_{max}< \infty $
  such that for all $s\in \rset^q$, the spectrum of $B(s)$ is in
  $\ccint{v_{min}, v_{max}}$.  For any $i \in [n]^\star$, $\bars_i
  \circ \map$ is globally Lipschitz on $\rset^q$ with constant $L_i$.
  The function $ s \mapsto \nabla (F \circ \map)(s) = B(s) \left(
  \bars \circ T(s) -s \right)$ is globally Lipschitz on $\rset^q$ with
  constant $L_{\dot \lyap}$.
\end{assumption}

\begin{lemma}
  \label{lem:stopping:geom}
  Let $\rho \in \ooint{0,1}$ and $\{D_k, k \geq 0 \}$ be real numbers
  such that $\sum_{k \geq 0} \rho^k |D_k| < \infty$. Let
  $\xi\sim\mathcal{G}^\star(1-{\rho})$. Then
  $\PE[D_{\xi-1}] = \rho \PE[D_{\xi}] + (1-\rho) D_0 = \PE[D_{\xi}] +
  (1-\rho) (D_0 - \PE[D_\xi])$.
\end{lemma}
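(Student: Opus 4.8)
The plan is to reduce the statement to a direct computation of the two expectations as power series in $\rho$, the only substantive step being a change of summation index; the stated absolute-summability hypothesis enters precisely to guarantee that both expectations are well defined and that the reindexing is legitimate.

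First I would dispatch the second equality, which is pure algebra and carries no probabilistic content: expanding,
\[
\rho \PE[D_\xi] + (1-\rho) D_0 = \PE[D_\xi] - (1-\rho)\PE[D_\xi] + (1-\rho) D_0 = \PE[D_\xi] + (1-\rho)(D_0 - \PE[D_\xi]) \eqsp.
\]
So everything reduces to proving $\PE[D_{\xi-1}] = \rho \PE[D_\xi] + (1-\rho) D_0$.

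For this identity I would write out both expectations against the geometric law $\PP(\xi = k) = (1-\rho)\rho^{k-1}$, $k \geq 1$. On one hand, after the shift $j = k-1$,
\[
\PE[D_{\xi-1}] = (1-\rho)\sum_{k \geq 1}\rho^{k-1} D_{k-1} = (1-\rho)\sum_{j \geq 0}\rho^{j} D_{j} \eqsp.
\]
On the other hand,
\[
\PE[D_\xi] = (1-\rho)\sum_{k \geq 1}\rho^{k-1} D_k = \rho^{-1}(1-\rho)\sum_{j \geq 1}\rho^{j} D_j \eqsp,
\]
so that $\rho\, \PE[D_\xi] = (1-\rho)\sum_{j \geq 1}\rho^{j} D_j$. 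Adding $(1-\rho) D_0$ reinserts the missing $j=0$ term and produces $(1-\rho)\sum_{j \geq 0}\rho^j D_j$, which is exactly the expression obtained above for $\PE[D_{\xi-1}]$; this closes the argument.

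The only point requiring care — and the sole reason for the hypothesis $\sum_{k \geq 0}\rho^k |D_k| < \infty$ — is justifying that these manipulations are valid rather than formal. I would invoke absolute convergence to conclude that $\sum_{j\geq 0}\rho^j D_j$ converges, hence that $\PE[D_{\xi-1}]$ and $\PE[D_\xi]$ are finite and that the index shift together with the term-by-term regrouping are legitimate, since an absolutely convergent series may be rearranged without changing its value. I do not expect any genuine obstacle beyond this bookkeeping: the lemma is an elementary identity for the geometric stopping time, and its role in the paper is simply to convert the conditional bound of Theorem~\ref{theo:main} (stated in terms of $\hatS_{t,\Xi_t-1}$) into a statement about $\hatS_{t,\Xi_t}$, as used in \eqref{eq:key:geom}.
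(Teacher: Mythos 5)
Your proof is correct and follows essentially the same route as the paper's: both arguments expand the two expectations against the geometric law and perform the same index shift $j=k-1$, differing only in whether one starts from $\PE[D_\xi]$ (the paper) or from $\PE[D_{\xi-1}]$ (you), with the summability hypothesis justifying the rearrangement. No gap to report.
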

\begin{proof}By definition of $\xi$,
  \begin{align*}
    \PE\left[D_\xi \right] &= (1-\rho) \sum_{k \geq 1} \rho^{k-1} D_k \\
    & = \rho^{-1} (1-\rho) \sum_{k \geq 1} \rho^{k} D_k \\
    & = \rho^{-1} (1-\rho) \sum_{k \geq 0} \rho^{k} D_k  - \rho^{-1} (1- \rho) D_0\\
    & = \rho^{-1} (1-\rho) \sum_{k \geq 1} \rho^{k-1} D_{k-1}  - \rho^{-1} (1- \rho) D_0\\
  &=  \rho^{-1} \PE\left[D_{\xi-1}\right]  - \rho^{-1} (1- \rho) D_0 \eqsp.
  \end{align*}
  This yields $\rho \PE\left[D_\xi \right] = \PE\left[D_{\xi-1}\right]
  - (1- \rho) D_0$ and concludes the proof.
\end{proof}

\begin{lemma}
  \label{lem:batch}
  For any $t \in [\kouter]^\star$, $k \in [\xi_t]^\star$, $\batch_{t,k}$ and
  $\F_{t,k-1}$ are independent. In addition, for any $s \in \rset^q$,
  $\lbatch^{-1} \PE\left[ \sum_{i \in \batch_{t,k}} \bars_i \circ
    \map(s) \right] = \bars \circ \map(s)$. Finally, assume that for
  any $i \in [n]^\star$, $\bars_i \circ \map$ is globally Lipschitz
  with constant $L_i$. Then for any $s,s' \in \rset^q$,
  \begin{multline*}
\hspace{-0.4cm}\PE\left[\|\lbatch^{-1} \sum_{i \in \batch_{t,k}} \left\{ \bars_i
  \circ \map(s) - \bars_i \circ \map(s') \right\} - \bars \circ \map(s) +
  \bars \circ \map(s') \|^2  \right] \\
\leq \frac{1}{\lbatch}\left(L^2 \|s-s'\|^2 - \| \bars \circ \map(s) - \bars \circ \map(s')\|^2 \right) \eqsp,
  \end{multline*}
  where $L^2 \eqdef n^{-1} \sum_{i=1}^n L_i^2$.
\end{lemma}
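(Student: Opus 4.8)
The statement bundles three claims, and the plan is to handle them in increasing order of difficulty: the independence of $\batch_{t,k}$ and $\F_{t,k-1}$ and the unbiasedness identity are immediate from the sampling construction, while the variance inequality is where the actual work lies. Throughout I would keep in mind that $\bars = n^{-1}\sum_{i=1}^n \bars_i$, so that $\bars\circ\map = n^{-1}\sum_{i=1}^n \bars_i\circ\map$, which is what makes the full sum play the role of a population mean.

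\emph{Independence and unbiasedness.} First I would note that, by design of Algorithm~\ref{algo:SPIDERSA}, the mini-batch $\batch_{t,k}$ is drawn from $[n]^\star$ independently of all randomness generated up to the end of inner loop $k-1$; since $\F_{t,k-1}$ is precisely the sigma-field collecting that past, $\batch_{t,k}$ and $\F_{t,k-1}$ are independent. For unbiasedness I would write $\batch_{t,k}=\{I_1,\ldots,I_\lbatch\}$ with each draw $I_j$ marginally uniform on $[n]^\star$ (which holds for sampling both with and without replacement), so that $\PE[\bars_{I_j}\circ\map(s)] = n^{-1}\sum_{i=1}^n \bars_i\circ\map(s) = \bars\circ\map(s)$; summing over the $\lbatch$ draws and dividing by $\lbatch$ gives $\lbatch^{-1}\PE[\sum_{i\in\batch_{t,k}}\bars_i\circ\map(s)] = \bars\circ\map(s)$ by linearity.

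\emph{Variance inequality.} Here I would abbreviate $f_i \eqdef \bars_i\circ\map(s) - \bars_i\circ\map(s')$ and observe that the centering term is exactly the population mean $\bar f \eqdef n^{-1}\sum_{i=1}^n f_i = \bars\circ\map(s) - \bars\circ\map(s')$. With $\batch_{t,k}=\{I_1,\ldots,I_\lbatch\}$, the quantity to bound is $\PE[\|\lbatch^{-1}\sum_{j=1}^\lbatch (f_{I_j} - \bar f)\|^2]$, which I would expand as $\lbatch^{-2}\sum_{j,j'}\PE[\pscal{f_{I_j}-\bar f}{f_{I_{j'}}-\bar f}]$. Each diagonal term equals the population variance $\sigma^2 \eqdef n^{-1}\sum_{i=1}^n\|f_i\|^2 - \|\bar f\|^2$. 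For the off-diagonal terms there are two cases: with replacement the draws are independent and each cross term vanishes, giving exactly $\lbatch^{-1}\sigma^2$; without replacement the pairwise covariance is non-positive, so the total is at most $\lbatch^{-1}\sigma^2$ as well. Finally I would invoke the Lipschitz hypothesis $\|f_i\|\le L_i\|s-s'\|$, whence $n^{-1}\sum_{i=1}^n\|f_i\|^2 \le L^2\|s-s'\|^2$ with $L^2 = n^{-1}\sum_{i=1}^n L_i^2$, which turns $\sigma^2 = n^{-1}\sum_{i=1}^n\|f_i\|^2 - \|\bar f\|^2$ into the asserted bound $\lbatch^{-1}(L^2\|s-s'\|^2 - \|\bars\circ\map(s) - \bars\circ\map(s')\|^2)$.

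The main obstacle is the without-replacement case, where one must control $\PE[\pscal{f_{I_j}-\bar f}{f_{I_{j'}}-\bar f}]$ for $j\neq j'$. Setting $g_i \eqdef f_i-\bar f$ so that $\sum_{i=1}^n g_i = 0$, I would compute this cross term as $\frac{1}{n(n-1)}\sum_{i\neq i'}\pscal{g_i}{g_{i'}} = \frac{1}{n(n-1)}(\|\sum_i g_i\|^2 - \sum_i\|g_i\|^2) = -\frac{\sigma^2}{n-1} \le 0$; the exact value yields the finite-population correction factor $(n-\lbatch)/(n-1)\le 1$, but for the inequality only the sign matters, so even a crude non-positivity argument suffices. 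This is the one place requiring care beyond routine second-moment bookkeeping, and it is precisely why the bound holds uniformly whether the mini-batch is sampled with or without replacement.
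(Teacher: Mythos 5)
Your proposal is correct. The paper does not prove this lemma in-house but simply cites \cite[Lemma~4]{fort:moulines:wai:2020}, and your argument is precisely the standard second-moment computation behind that reference: marginal uniformity of each draw gives unbiasedness, the diagonal terms give the population variance $\sigma^2 = n^{-1}\sum_i\|f_i\|^2 - \|\bar f\|^2$, and the one genuinely non-trivial point --- that the cross-covariances vanish with replacement and equal $-\sigma^2/(n-1)\le 0$ without replacement because $\sum_i g_i = 0$ --- is handled correctly, yielding the claimed bound uniformly over both sampling schemes.
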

\begin{proof}
  See \cite[Lemma 4]{fort:moulines:wai:2020}; the proof holds true
  when $\batch_{t,k}$ is sampled with or without replacement.
  \end{proof}

\begin{proposition}
  \label{prop:bias}
  For any $t \in [\kouter]^\star$, $k \in [\xi_t-1]$,
  \[
\PE\left[ \Smem_{t,k+1} \vert \F_{t,k} \right] - \bars \circ
\map(\hatS_{t,k}) = \Smem_{t,k} - \bars \circ \map(\hatS_{t,k-1})
\eqsp,
\]
and
\[
\PE\left[ \Smem_{t,k+1} - \bars \circ \map(\hatS_{t,k}) \vert \F_{t,0}
  \right] = \mathcal{E}_t \eqsp.
\]
  \end{proposition}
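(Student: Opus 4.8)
The plan is to establish the two identities in turn: the first by a one-step conditional-expectation computation using the unbiasedness of the batch estimator, and the second by recognizing a martingale structure in the inner loop and unrolling it back to the reset step.

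For the first identity, I would start from the update on Line~\ref{eq:SA:update:Smem}, namely $\Smem_{t,k+1} = \Smem_{t,k} + \lbatch^{-1} \sum_{i \in \batch_{t,k+1}} \left( \bars_i \circ \map(\hatS_{t,k}) - \bars_i \circ \map(\hatS_{t,k-1}) \right)$, and take the conditional expectation given $\F_{t,k}$. The quantities $\Smem_{t,k}$, $\hatS_{t,k}$ and $\hatS_{t,k-1}$ are all $\F_{t,k}$-measurable, whereas $\batch_{t,k+1}$ is independent of $\F_{t,k}$ by Lemma~\ref{lem:batch}. Hence, applying the unbiasedness statement of Lemma~\ref{lem:batch} conditionally, with the (measurable) arguments $s = \hatS_{t,k}$ and $s' = \hatS_{t,k-1}$ frozen, gives $\lbatch^{-1} \PE[ \sum_{i \in \batch_{t,k+1}} ( \bars_i \circ \map(\hatS_{t,k}) - \bars_i \circ \map(\hatS_{t,k-1}) ) \vert \F_{t,k}] = \bars \circ \map(\hatS_{t,k}) - \bars \circ \map(\hatS_{t,k-1})$. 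Substituting this into the conditional expectation of the update and rearranging yields the first claim immediately.

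For the second identity, I would set $D_k \eqdef \Smem_{t,k} - \bars \circ \map(\hatS_{t,k-1})$, so that the first identity reads $\PE[D_{k+1} \vert \F_{t,k}] = D_k$; that is, within epoch $t$ the sequence $\{D_k\}_{k \geq 0}$ is a martingale for the nested filtration $\F_{t,0} \subseteq \F_{t,1} \subseteq \cdots$. Iterating the tower property then gives $\PE[D_{k+1} \vert \F_{t,0}] = \PE[D_k \vert \F_{t,0}] = \cdots = \PE[D_0 \vert \F_{t,0}] = D_0$, the last equality holding because $D_0$ is $\F_{t,0}$-measurable. It remains to identify $D_0$: the reset step on Lines~\ref{eq:SA:reset:0}--\ref{eq:SA:reset:1} sets $\Smem_{t,0} = \bars \circ \map(\hatS_{t,-1}) + \mathcal{E}_t$, whence $D_0 = \mathcal{E}_t$, and $\mathcal{E}_t$ is $\F_{t,0}$-measurable by construction of $\F_{t,0}$. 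This establishes $\PE[\Smem_{t,k+1} - \bars \circ \map(\hatS_{t,k}) \vert \F_{t,0}] = \mathcal{E}_t$.

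The only delicate part is the measurability bookkeeping underlying the first identity: one must be sure that the accumulated estimator $\Smem_{t,k}$ and the two iterates entering the increment are $\F_{t,k}$-measurable while the fresh batch $\batch_{t,k+1}$ is drawn independently of $\F_{t,k}$, which is exactly what Lemma~\ref{lem:batch} guarantees (and which holds whether sampling is with or without replacement). Once that conditional independence is in place, both the one-step unbiasedness and the subsequent tower-property induction are routine.
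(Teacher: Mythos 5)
Your proposal is correct and follows essentially the same route as the paper: the first identity comes from conditioning the update of Line~\ref{eq:SA:update:Smem} on $\F_{t,k}$ and invoking the unbiasedness and independence statements of Lemma~\ref{lem:batch}, and the second follows by iterating this (the paper's ``induction on $k$'' is exactly your tower-property/martingale unrolling) down to $D_0 = \Smem_{t,0} - \bars \circ \map(\hatS_{t,-1}) = \mathcal{E}_t \in \F_{t,0}$. Your explicit attention to the measurability of $\Smem_{t,k}$, $\hatS_{t,k}$, $\hatS_{t,k-1}$ and the independence of the fresh batch is a welcome elaboration of details the paper leaves implicit.
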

\begin{proof}
Let $t \in [\kouter]^\star$, $k \in [\xi_t-1]$.  By
Lemma~\ref{lem:batch},
  \begin{align*}
    \PE\left[ \Smem_{t,k+1} \vert \F_{t,k} \right] = \Smem_{t,k} +
    \bars \circ \map(\hatS_{t,k}) - \bars \circ \map(\hatS_{t,k-1}) \eqsp.
  \end{align*}
 By definition of $\Smem_{t,0}$ and of the filtrations, $\Smem_{t,0} -
 \bars \circ \map(\hatS_{t,-1}) = \mathcal{E}_t \in \F_{t,0}$. The
 proof follows by induction on $k$.
\end{proof}

\begin{proposition}
  \label{prop:variance}
 Assume that for any $i \in [n]^\star$, $\bars_i \circ \map$ is
 globally Lipschitz with constant $L_i$.  For any $t \in
 [\kouter]^\star$, $k \in [\xi_t-1]$,
  \begin{align*}
 &\PE\left[ \|\Smem_{t,k+1} - \PE\left[ \Smem_{t,k+1} \vert \F_{t,k}
        \right] \|^2 \vert \F_{t,k} \right] \\ & \leq
    \frac{1}{\lbatch}\left(L^2 \|\hatS_{t,k} - \hatS_{t,k-1}\|^2 - \|
    \bars \circ \map(\hatS_{t,k}) - \bars \circ
    \map(\hatS_{t,k-1})\|^2 \right) \\ & \leq \frac{L^2}{\lbatch}
    \pas_{t,k}^2 \|\Smem_{t,k} - \hatS_{t,k-1}\|^2 \eqsp,
  \end{align*}
where $L^2 \eqdef n^{-1} \sum_{i=1}^n L_i^2$.
  \end{proposition}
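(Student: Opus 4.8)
The plan is to recognise the left-hand side as the conditional second moment of the Monte Carlo error incurred by the batch $\batch_{t,k+1}$, and then to invoke Lemma~\ref{lem:batch} in a conditional form. First I would make the centred increment explicit. The update on Line~\ref{eq:SA:update:Smem} reads $\Smem_{t,k+1} - \Smem_{t,k} = \lbatch^{-1} \sum_{i \in \batch_{t,k+1}} \{\bars_i \circ \map(\hatS_{t,k}) - \bars_i \circ \map(\hatS_{t,k-1})\}$, while Proposition~\ref{prop:bias} identifies $\PE[\Smem_{t,k+1} \mid \F_{t,k}] = \Smem_{t,k} + \bars \circ \map(\hatS_{t,k}) - \bars \circ \map(\hatS_{t,k-1})$. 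Subtracting the two, the centred quantity $\Smem_{t,k+1} - \PE[\Smem_{t,k+1} \mid \F_{t,k}]$ equals exactly the batch fluctuation $\lbatch^{-1} \sum_{i \in \batch_{t,k+1}} \{\bars_i \circ \map(\hatS_{t,k}) - \bars_i \circ \map(\hatS_{t,k-1})\} - \bars \circ \map(\hatS_{t,k}) + \bars \circ \map(\hatS_{t,k-1})$, which is precisely the vector whose squared norm is controlled in Lemma~\ref{lem:batch} with $s = \hatS_{t,k}$ and $s' = \hatS_{t,k-1}$.

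The first inequality then follows from a conditional version of Lemma~\ref{lem:batch}, and this is the step that requires the most care, since Lemma~\ref{lem:batch} is stated for an unconditional expectation whereas here I need the expectation given $\F_{t,k}$. The transfer is justified by two facts: $\hatS_{t,k}$ and $\hatS_{t,k-1}$ are $\F_{t,k}$-measurable, hence act as frozen, deterministic arguments once we condition; and $\batch_{t,k+1}$ is independent of $\F_{t,k}$ (the first assertion of Lemma~\ref{lem:batch}). Conditioning on $\F_{t,k}$ therefore leaves the law of $\batch_{t,k+1}$ unchanged while fixing $s = \hatS_{t,k}$ and $s' = \hatS_{t,k-1}$; applying the bound of Lemma~\ref{lem:batch} to these frozen values yields $\lbatch^{-1}(L^2 \|\hatS_{t,k} - \hatS_{t,k-1}\|^2 - \|\bars \circ \map(\hatS_{t,k}) - \bars \circ \map(\hatS_{t,k-1})\|^2)$, which is the first claimed bound.

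The second inequality is elementary. I would discard the nonpositive term $-\|\bars \circ \map(\hatS_{t,k}) - \bars \circ \map(\hatS_{t,k-1})\|^2 \le 0$, leaving $(L^2/\lbatch)\|\hatS_{t,k} - \hatS_{t,k-1}\|^2$, and then substitute the stochastic-approximation recursion. Line~\ref{eq:SA:updateclassical} for $k \ge 1$, and the analogous epoch-reset update with step size $\pas_{t,0}$ for $k = 0$, both give $\hatS_{t,k} - \hatS_{t,k-1} = \pas_{t,k}(\Smem_{t,k} - \hatS_{t,k-1})$, so that $\|\hatS_{t,k} - \hatS_{t,k-1}\|^2 = \pas_{t,k}^2 \|\Smem_{t,k} - \hatS_{t,k-1}\|^2$ uniformly over $k \in [\xi_t - 1]$. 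This produces the announced bound $(L^2/\lbatch)\pas_{t,k}^2 \|\Smem_{t,k} - \hatS_{t,k-1}\|^2$ and closes the argument. The only genuine subtlety, as noted, is the conditional invocation of Lemma~\ref{lem:batch}; everything else is a matter of identifying the centred increment and plugging in the SA recursion.
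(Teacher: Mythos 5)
Your proposal is correct and follows essentially the same route as the paper's proof: identify the centred increment $\Smem_{t,k+1} - \PE[\Smem_{t,k+1} \mid \F_{t,k}]$ as the batch fluctuation via Proposition~\ref{prop:bias}, apply Lemma~\ref{lem:batch} (conditionally, using independence of $\batch_{t,k+1}$ from $\F_{t,k}$) for the first inequality, and use the update $\hatS_{t,k} - \hatS_{t,k-1} = \pas_{t,k}(\Smem_{t,k} - \hatS_{t,k-1})$ for the second. Your explicit justification of the conditional form of Lemma~\ref{lem:batch} merely spells out what the paper leaves implicit.
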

\begin{proof}
  Let $t \in [\kouter]^\star$, $k \in [\xi_t-1]$.  By
  Lemma~\ref{lem:batch}, Proposition~\ref{prop:bias}, the definition
  of $\Smem_{t,k+1}$ and of the filtration $\F_{t,k}$,
  \begin{align*}
    & \Smem_{t,k+1} - \PE\left[ \Smem_{t,k+1} \vert \F_{t,k} \right]
    \\
    & = \Smem_{t,k+1} - \bars \circ \map(\hatS_{t,k}) - \Smem_{t,k}
    + \bars \circ \map(\hatS_{t,k-1})
    \\ & = \lbatch^{-1} \sum_{i \in
      \batch_{t,k+1}} \{ \bars_i \circ \map(\hatS_{t,k}) - \bars_i
    \circ \map(\hatS_{t,k-1}) \} \\
    & - \bars \circ \map(\hatS_{t,k}) + \bars
    \circ \map(\hatS_{t,k-1}) \eqsp.
  \end{align*}
  We then conclude by Lemma~\ref{lem:batch} for the first inequality;
  and by using the definition of $\hatS_{t,k}$ for the second one.
\end{proof}

\begin{proposition}
  \label{prop:squarederror}
  Assume that for any $i \in [n]^\star$, $\bars_i \circ \map$ is
 globally Lipschitz with constant $L_i$.  For any $t \in
 [\kouter]^\star$, $k \in [\xi_t-1]$,
  \begin{align*}
 &\PE\left[ \|\Smem_{t,k+1} - \bars \circ \map(\hatS_{t,k}) \|^2 \vert
      \F_{t,k} \right] \\ & \leq \frac{L^2}{\lbatch} \pas_{t,k}^2
    \|\Smem_{t,k} - \hatS_{t,k-1}\|^2 +\|\Smem_{t,k} - \bars \circ
    \map(\hatS_{t,k-1})\|^2 \eqsp,
  \end{align*}
 where $L^2 \eqdef n^{-1} \sum_{i=1}^n L_i^2$.
  \end{proposition}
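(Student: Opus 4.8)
The plan is to use the bias–variance (Pythagorean) decomposition of the conditional expectation and then read off the two resulting terms directly from Proposition~\ref{prop:bias} and Proposition~\ref{prop:variance}. First I would observe that since $\hatS_{t,k}$ is $\F_{t,k}$-measurable and $\bars \circ \map$ is a fixed deterministic map, the target $c \eqdef \bars \circ \map(\hatS_{t,k})$ is itself $\F_{t,k}$-measurable, so inside $\PE[\cdot \vert \F_{t,k}]$ it behaves like a constant. Inserting $\pm \PE[\Smem_{t,k+1} \vert \F_{t,k}]$ and expanding the square, the cross term vanishes under $\PE[\cdot \vert \F_{t,k}]$ because the factor $\Smem_{t,k+1} - \PE[\Smem_{t,k+1} \vert \F_{t,k}]$ is centered given $\F_{t,k}$ while the other factor $\PE[\Smem_{t,k+1} \vert \F_{t,k}] - c$ is $\F_{t,k}$-measurable and pulls out. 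This gives the exact identity
\[
\PE\left[ \|\Smem_{t,k+1} - c\|^2 \vert \F_{t,k} \right] = \PE\left[ \|\Smem_{t,k+1} - \PE[\Smem_{t,k+1} \vert \F_{t,k}]\|^2 \vert \F_{t,k} \right] + \|\PE[\Smem_{t,k+1} \vert \F_{t,k}] - c\|^2 \eqsp.
\]

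Next I would identify the two pieces. The second, deterministic term is the squared bias: by Proposition~\ref{prop:bias} we have $\PE[\Smem_{t,k+1} \vert \F_{t,k}] - \bars \circ \map(\hatS_{t,k}) = \Smem_{t,k} - \bars \circ \map(\hatS_{t,k-1})$, so it equals $\|\Smem_{t,k} - \bars \circ \map(\hatS_{t,k-1})\|^2$, which is precisely the second summand in the claimed bound. The first term is the conditional variance, and Proposition~\ref{prop:variance} bounds it by $(L^2/\lbatch)\, \pas_{t,k}^2 \|\Smem_{t,k} - \hatS_{t,k-1}\|^2$, which is exactly the first summand. Adding the two bounds yields the proposition.

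Since both auxiliary propositions are already established, there is no genuine obstacle here; the only point that requires care is verifying the $\F_{t,k}$-measurability of $c = \bars \circ \map(\hatS_{t,k})$, which guarantees both that the Pythagorean identity applies and that the cross term cancels. This holds because $\hatS_{t,k}$ is constructed from quantities measurable at the end of inner loop $\#k$ (through the filtration definition $\F_{t,k+1} \eqdef \sigma(\F_{t,k}, \batch_{t,k+1})$) and $\bars \circ \map$ is deterministic.
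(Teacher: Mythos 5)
Your proof is correct and follows essentially the same route as the paper: the bias--variance (Pythagorean) decomposition of the conditional second moment, with the bias term identified via Proposition~\ref{prop:bias} and the variance term bounded via Proposition~\ref{prop:variance}. Your explicit check that $\bars \circ \map(\hatS_{t,k})$ is $\F_{t,k}$-measurable is a welcome extra precaution, but otherwise the argument matches the paper's proof.
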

\begin{proof}
  By definition of the conditional expectation, we have for any
  r.v. $\phi(V)$
  \[
  \PE\left[ \| U - \phi(V) \|^2 \vert V \right] = \PE\left[ \| U -
    \PE\left[U \vert V \right] \|^2 \vert V \right] + \| \PE\left[U
    \vert V \right] - \phi(V) \|^2 \eqsp.
  \]
  The proof follows from this equality and Propositions~\ref{prop:bias}
  and \ref{prop:variance}.
\end{proof}
\begin{corollary}
  \label{coro:squarederror:geom}
  Assume that for any $i \in [n]^\star$, $\bars_i \circ \map$ is
  globally Lipschitz with constant $L_i$. For any
  $t \in [\kouter]^\star$, let $\rho_{t} \in \ooint{0,1}$ and
  $\Xi_t\sim\mathcal{G}^\star(1-{\rho_t})$. For any
  $t \in [\kouter]^\star$,
 \begin{align*}
&\bPE_t\left[ \left( \pas_{t,\Xi_t} - \rho_t \pas_{t,\Xi_t+1} \right)
     \|\Smem_{t,\Xi_t} - \bars \circ \map(\hatS_{t,\Xi_t-1}) \|^2
     \vert \F_{t,0} \right] \\ & \leq \frac{L^2 \rho_t}{\lbatch}
   \bPE_t\left[\pas_{t,\Xi_t+1} \pas_{t,\Xi_t}^2 \|\Smem_{t,\Xi_t} -
     \hatS_{t,\Xi_t-1}\|^2 \vert \F_{t,0} \right] \\ & + \frac{L^2
     (1-\rho_t)}{\lbatch} \pas_{t,1} \pas_{t,0}^2 \|\Smem_{t,0} -
   \hatS_{t,-1}\|^2 + (1-\rho_t) \pas_{t,1} \|\mathcal{E}_t\|^2 \eqsp,
   \end{align*}
 where $L^2 \eqdef n^{-1} \sum_{i=1}^n L_i^2$.
\end{corollary}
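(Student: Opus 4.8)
The plan is to reduce everything to the scalar recursion of Proposition~\ref{prop:squarederror} and then resolve the geometric averaging with the telescoping identity of Lemma~\ref{lem:stopping:geom}. Throughout I fix $t$, abbreviate $a_k \eqdef \pas_{t,k}$, and introduce the two nonnegative sequences $e_k \eqdef \|\Smem_{t,k} - \bars \circ \map(\hatS_{t,k-1})\|^2$ and $d_k \eqdef \|\Smem_{t,k} - \hatS_{t,k-1}\|^2$. First I would take $\PE[\cdot \vert \F_{t,0}]$ on both sides of Proposition~\ref{prop:squarederror} and use the tower property (since $\F_{t,0} \subseteq \F_{t,k}$); writing $\tilde e_k \eqdef \PE[e_k \vert \F_{t,0}]$ and $\tilde d_k \eqdef \PE[d_k \vert \F_{t,0}]$, this gives the one-step bound $\tilde e_{k+1} \leq \tilde e_k + (L^2/\lbatch)\, a_k^2\, \tilde d_k$ for every $k \geq 0$. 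The two boundary values are read off the algorithm: since $\Smem_{t,0} = \bars \circ \map(\hatS_{t,-1}) + \mathcal{E}_t$ with $\mathcal{E}_t$ being $\F_{t,0}$-measurable, we get $\tilde e_0 = \|\mathcal{E}_t\|^2$, while $\tilde d_0 = d_0 = \|\Smem_{t,0} - \hatS_{t,-1}\|^2$.

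Next I would unfold the left-hand side directly from the definition of $\bPE_t$. Since the step sizes are deterministic,
\[
\bPE_t\left[(a_{\Xi_t} - \rho_t a_{\Xi_t+1}) e_{\Xi_t} \vert \F_{t,0}\right] = (1-\rho_t)\sum_{k \geq 1} \rho_t^{k-1} a_k \tilde e_k - \rho_t (1-\rho_t)\sum_{k \geq 1}\rho_t^{k-1} a_{k+1} \tilde e_k \eqsp.
\]
The idea is to introduce the $\F_{t,0}$-measurable sequence $D_k \eqdef a_{k+1}\tilde e_k$ for $k \geq 0$: the second sum is exactly $\rho_t$ times the geometric average $(1-\rho_t)\sum_{k\geq1}\rho_t^{k-1}D_k$, and the first sum is comparable to $\PE[D_{\Xi_t-1}] = (1-\rho_t)\sum_{k\geq1}\rho_t^{k-1}a_k\tilde e_{k-1}$ in the sense of Lemma~\ref{lem:stopping:geom}. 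Indeed the first sum exceeds $\PE[D_{\Xi_t-1}]$ by precisely $(1-\rho_t)\sum_{k\geq1}\rho_t^{k-1}a_k(\tilde e_k - \tilde e_{k-1})$; bounding each difference through the one-step recursion (and using $a_k \geq 0$ to preserve the inequality) controls this discrepancy by $(L^2/\lbatch)(1-\rho_t)\sum_{k\geq1}\rho_t^{k-1}a_k a_{k-1}^2 \tilde d_{k-1}$.

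Combining these and invoking Lemma~\ref{lem:stopping:geom} on the ($\F_{t,0}$-measurable) sequence $\{D_k\}$, namely $\PE[D_{\Xi_t-1}] = \rho_t \PE[D_{\Xi_t}] + (1-\rho_t)D_0$, the two $\rho_t \PE[D_{\Xi_t}]$ contributions cancel, so the left-hand side is bounded by $(1-\rho_t)D_0$ plus the discrepancy term. Finally I would reindex the discrepancy sum with $j = k-1$ and peel off the $j=0$ term: the $j=0$ contribution is $(L^2/\lbatch)(1-\rho_t)a_1 a_0^2 \tilde d_0$, which is the stated boundary term since $\tilde d_0 = \|\Smem_{t,0}-\hatS_{t,-1}\|^2$; the tail $j\geq1$ becomes $(L^2\rho_t/\lbatch)\,\bPE_t[a_{\Xi_t+1}a_{\Xi_t}^2 d_{\Xi_t} \vert \F_{t,0}]$ after factoring one $\rho_t$; and $(1-\rho_t)D_0 = (1-\rho_t)a_1\|\mathcal{E}_t\|^2$ is the remaining $\mathcal{E}_t$ term. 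The main obstacle is purely bookkeeping: keeping the index shifts between $\tilde e_k$ and $\tilde e_{k-1}$, and between $D_k$ and $D_{k-1}$, perfectly aligned, and checking the summability hypothesis $\sum_k \rho_t^k |D_k| < \infty$ required by Lemma~\ref{lem:stopping:geom} so that the rearrangements of these geometric series are legitimate.
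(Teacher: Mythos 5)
Your proposal is correct and follows essentially the same route as the paper: both start from the one-step recursion of Proposition~\ref{prop:squarederror} conditioned on $\F_{t,0}$, average it against the geometric law, and resolve the resulting sums with Lemma~\ref{lem:stopping:geom}, identifying the boundary terms $\|\mathcal{E}_t\|^2$ and $\pas_{t,1}\pas_{t,0}^2\|\Smem_{t,0}-\hatS_{t,-1}\|^2$ at $k=0$. The only (cosmetic) difference is that the paper invokes Lemma~\ref{lem:stopping:geom} a second time on the sequence $\pas_{t,k+1}\pas_{t,k}^2\|\Smem_{t,k}-\hatS_{t,k-1}\|^2$, whereas you perform that shift by an explicit reindexing of the geometric series.
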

\begin{proof}
  Let $t \in [\kouter]^\star$ and $k \in [\xi_t-1]$. From
  Proposition~\ref{prop:squarederror} and since $\F_{t,0} \subseteq
  \F_{t,k}$ for $k \in [\xi_t-1]$, we have
  \begin{align*}
& \PE\left[\|\Smem_{t,k+1} - \bars \circ \map(\hatS_{t,k}) \|^2 \vert
      \F_{t,0} \right] \\ & \leq \PE\left[ \|\Smem_{t,k} - \bars \circ
      \map(\hatS_{t,k-1})\|^2 + \frac{L^2}{\lbatch} \pas_{t,k}^2
      \|\Smem_{t,k} - \hatS_{t,k-1}\|^2 \vert \F_{t,0} \right]\eqsp.
  \end{align*}
  Multiply by $\pas_{t,k+1}$ and apply with $k =
  \xi_t-1$:
 \begin{align*}
& \PE\left[\pas_{t,\xi_t} \|\Smem_{t,\xi_t} - \bars \circ \map(\hatS_{t,\xi_t-1}) \|^2 \vert
      \F_{t,0} \right] \\ & \leq \PE\left[\pas_{t,\xi_t} \|\Smem_{t,\xi_t-1} - \bars \circ
     \map(\hatS_{t,\xi_t-2})\|^2   \vert \F_{t,0} \right] \\
   & + \frac{L^2}{\lbatch} \PE\left[\pas_{t,\xi_t} \pas_{t,\xi_t-1}^2
      \|\Smem_{t,\xi_t-1} - \hatS_{t,\xi_t-2}\|^2 \vert \F_{t,0} \right]\eqsp.
  \end{align*}
This implies
  \begin{align*}
& \bPE_t\left[\pas_{t,\Xi_t} \|\Smem_{t,\Xi_t} - \bars \circ \map(\hatS_{t,\Xi_t-1}) \|^2 \vert
      \F_{t,0} \right] \\ & \leq \bPE_t\left[\pas_{t,\Xi_t} \|\Smem_{t,\Xi_t-1} - \bars \circ
     \map(\hatS_{t,\Xi_t-2})\|^2   \vert \F_{t,0} \right] \\
   & + \frac{L^2}{\lbatch} \bPE_t\left[\pas_{t,\Xi_t} \pas_{t,\Xi_t-1}^2
      \|\Smem_{t,\Xi_t-1} - \hatS_{t,\Xi_t-2}\|^2 \vert \F_{t,0} \right]\eqsp.
  \end{align*}
  By Lemma~\ref{lem:stopping:geom}, we have
\begin{align*}
& \bPE_t\left[\pas_{t,\Xi_t} \|\Smem_{t,\Xi_t-1} - \bars \circ
    \map(\hatS_{t,\Xi_t-2})\|^2 \vert \F_{t,0} \right] \\ & = \rho_t
  \bPE_t\left[\pas_{t,\Xi_t+1} \|\Smem_{t,\Xi_t} - \bars \circ
    \map(\hatS_{t,\Xi_t-1})\|^2 \vert \F_{t,0} \right] \\ & +
  (1-\rho_t) \pas_{t,1} \bPE_t\left[ \|\Smem_{t,0} - \bars \circ
    \map(\hatS_{t,-1})\|^2 \vert \F_{t,0} \right] \eqsp;
\end{align*}
by definition of $\Smem_{t,0}$ and $\F_{t,0}$, the last term is equal
to $(1-\rho_t) \pas_{t,1} \| \mathcal{E}_t \|^2$.
By Lemma~\ref{lem:stopping:geom}, we have
\begin{align*}
& \bPE_t\left[\pas_{t,\Xi_t} \pas_{t,\Xi_t-1}^2 \|\Smem_{t,\Xi_t-1} -
    \hatS_{t,\Xi_t-2}\|^2 \vert \F_{t,0} \right] \\ & = \rho_t
  \bPE_t\left[\pas_{t,\Xi_t+1} \pas_{t,\Xi_t}^2 \|\Smem_{t,\Xi_t} -
    \hatS_{t,\Xi_t-1}\|^2 \vert \F_{t,0} \right] \\ & + (1-\rho_t)
\pas_{t,1} \pas_{t,0}^2 \|\Smem_{t,0} - \hatS_{t,-1}\|^2 \eqsp.
  \end{align*}
 This concludes the proof.
\end{proof}
\begin{lemma}
  \label{lem:majo:gradient}
  For any $h,s,S \in \rset^q$ and any $q \times q$ symmetric matrix $B$, it
  holds
  \begin{align*}
- 2 \pscal{B h }{S} &= -\pscal{B S}{S} - \pscal{B h}{h} + \pscal{B \{h
  - S\}}{h - S} \eqsp.
  \end{align*}
\end{lemma}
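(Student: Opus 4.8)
The plan is to verify this polarization-type identity by a direct expansion of the quadratic form on the right-hand side, using only bilinearity of the scalar product and the symmetry of $B$. Note first that the variable $s$ appearing in the statement plays no role whatsoever: only $h$ and $S$ occur in the displayed equation, so I would simply treat the claim as an identity in the two vectors $h, S \in \rset^q$ (the $s$ is a harmless leftover in the quantifier).

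The one substantive term is $\pscal{B\{h-S\}}{h-S}$, so I would expand it by linearity in each argument:
\[
\pscal{B\{h-S\}}{h-S} = \pscal{Bh}{h} - \pscal{Bh}{S} - \pscal{BS}{h} + \pscal{BS}{S} \eqsp.
\]
The key step is to invoke the symmetry of $B$, which gives $\pscal{BS}{h} = \pscal{S}{Bh} = \pscal{Bh}{S}$; this merges the two cross terms into a single $-2\pscal{Bh}{S}$, yielding
\[
\pscal{B\{h-S\}}{h-S} = \pscal{Bh}{h} - 2\pscal{Bh}{S} + \pscal{BS}{S} \eqsp.
\]

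Finally I would substitute this back into the right-hand side of the claimed identity. The terms $\pscal{Bh}{h}$ and $\pscal{BS}{S}$ cancel exactly against the $-\pscal{Bh}{h}$ and $-\pscal{BS}{S}$ already present, leaving precisely $-2\pscal{Bh}{S}$, which is the left-hand side. There is no genuine obstacle here; the only point demanding attention is that the collapse of the cross terms relies essentially on $B$ being symmetric, without which $\pscal{BS}{h}$ and $\pscal{Bh}{S}$ would differ and the identity would fail — but symmetry is part of the hypothesis, so the computation closes immediately.
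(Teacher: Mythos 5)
Your proof is correct: the direct expansion of $\pscal{B\{h-S\}}{h-S}$ by bilinearity, together with the symmetry of $B$ to merge the cross terms, is exactly the intended argument, and your observation that $s$ does not actually appear in the identity is accurate. The paper states this lemma without proof, treating it as immediate, so there is nothing to compare beyond noting that your computation is the standard one.
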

\begin{proposition}
  \label{prop:lyap:DL}
  Assume A\ref{hyp:model} to A\ref{hyp:regV}. For any
  $t \in [\kouter]^\star$ and $k \in [\xi_t-1]$,
  \begin{align*}
    &\PE\left[\lyap(\hatS_{t,k+1}) \vert \F_{t,0} \right]
    + \frac{v_{\min}}{2} \pas_{t,k+1} \PE\left[   \|h(\hatS_{t,k})\|^2 \vert \F_{t,0} \right]  \\
    & \leq  \PE\left[\lyap(\hatS_{t,k}) \vert \F_{t,0} \right] \\
    &+   \frac{v_\max}{2} \pas_{t,k+1} \PE\left[ \|\Smem_{t,k+1} - \bars    \circ \map(\hatS_{t,k}) \|^2 \vert \F_{t,0} \right] \\
   & -   \frac{\pas_{t,k+1}}{2} \left( v_{\min} -  \pas_{t,k+1}  L_{\dot  \lyap}\right)  \PE\left[ \|\Smem_{t,k+1} - \hatS_{t,k} \|^2 \vert \F_{t,0}      \right] \eqsp.
    \end{align*}
\end{proposition}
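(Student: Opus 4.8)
The plan is to prove the inequality pathwise (almost surely) and only at the very end take $\PE[\cdot\vert\F_{t,0}]$ on both sides; since conditional expectation is monotone and linear, that final step is immediate, so the whole argument reduces to a deterministic one-step descent estimate for $\lyap$ along the update $\hatS_{t,k+1}=\hatS_{t,k}+\pas_{t,k+1}(\Smem_{t,k+1}-\hatS_{t,k})$.

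First I would use that, by A\ref{hyp:regV}, $\lyap$ has $L_{\dot\lyap}$-Lipschitz gradient, so the standard quadratic (descent) bound gives
\begin{equation*}
\lyap(\hatS_{t,k+1}) \leq \lyap(\hatS_{t,k}) + \pscal{\nabla\lyap(\hatS_{t,k})}{\hatS_{t,k+1}-\hatS_{t,k}} + \frac{L_{\dot\lyap}}{2}\,\|\hatS_{t,k+1}-\hatS_{t,k}\|^2 \eqsp.
\end{equation*}
Writing $S\eqdef\Smem_{t,k+1}-\hatS_{t,k}$, $h\eqdef h(\hatS_{t,k})$ and $B\eqdef B(\hatS_{t,k})$, and inserting the update rule together with the gradient identity $\nabla\lyap(s)=-B(s)h(s)$ implied by A\ref{hyp:regV}, this becomes
\begin{equation*}
\lyap(\hatS_{t,k+1}) \leq \lyap(\hatS_{t,k}) - \pas_{t,k+1}\pscal{Bh}{S} + \frac{L_{\dot\lyap}}{2}\,\pas_{t,k+1}^2\,\|S\|^2 \eqsp.
\end{equation*}

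The crux is the cross term $-\pas_{t,k+1}\pscal{Bh}{S}$, which I would rewrite with Lemma~\ref{lem:majo:gradient} applied to $h$, $S$ and $B$:
\begin{equation*}
-\pas_{t,k+1}\pscal{Bh}{S}=\frac{\pas_{t,k+1}}{2}\left(-\pscal{BS}{S}-\pscal{Bh}{h}+\pscal{B\{h-S\}}{h-S}\right)\eqsp.
\end{equation*}
The key algebraic observation, on which the whole alignment hinges, is that $h-S=\bars\circ\map(\hatS_{t,k})-\Smem_{t,k+1}$, so that $\|h-S\|^2=\|\Smem_{t,k+1}-\bars\circ\map(\hatS_{t,k})\|^2$ is precisely the error term appearing in the statement. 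I would then invoke the uniform spectral bounds of A\ref{hyp:regV}, $v_{\min}\|u\|^2\leq\pscal{Bu}{u}\leq v_\max\|u\|^2$ for every $u\in\rset^q$: lower-bound $\pscal{Bh}{h}$ and $\pscal{BS}{S}$ by $v_{\min}\|\cdot\|^2$ and upper-bound $\pscal{B\{h-S\}}{h-S}$ by $v_\max\|h-S\|^2$. These estimates hold for all arguments, hence almost surely despite the randomness of $\hatS_{t,k}$.

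Combining the three displays and collecting the two $\|S\|^2$ contributions into $-\tfrac{\pas_{t,k+1}}{2}(v_{\min}-\pas_{t,k+1}L_{\dot\lyap})\|S\|^2$ yields the pathwise inequality
\begin{multline*}
\lyap(\hatS_{t,k+1})+\frac{v_{\min}}{2}\pas_{t,k+1}\|h\|^2 \leq \lyap(\hatS_{t,k})+\frac{v_\max}{2}\pas_{t,k+1}\|\Smem_{t,k+1}-\bars\circ\map(\hatS_{t,k})\|^2 \\ -\frac{\pas_{t,k+1}}{2}\left(v_{\min}-\pas_{t,k+1}L_{\dot\lyap}\right)\|\Smem_{t,k+1}-\hatS_{t,k}\|^2 \eqsp,
\end{multline*}
and taking $\PE[\cdot\vert\F_{t,0}]$ gives the claim. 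The main obstacle is not analytic but a matter of bookkeeping: one must choose the assignment in Lemma~\ref{lem:majo:gradient} so that the three resulting quadratic forms line up exactly with the three terms of the statement — one $v_{\min}$ term producing the descent in $\|h(\hatS_{t,k})\|^2$, one $v_{\min}$ term that fuses with the Lipschitz quadratic into the coefficient $v_{\min}-\pas_{t,k+1}L_{\dot\lyap}$, and the $v_\max$ term producing the bias/variance contribution $\|\Smem_{t,k+1}-\bars\circ\map(\hatS_{t,k})\|^2$. Correctly identifying $h-S$ is what makes this last coincidence work.
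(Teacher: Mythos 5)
Your proof is correct and follows essentially the same route as the paper's: the descent lemma for the $L_{\dot\lyap}$-Lipschitz gradient of $\lyap$, the identity $\nabla\lyap(s)=-B(s)h(s)$, Lemma~\ref{lem:majo:gradient} applied to the cross term, the identification $h-S=\bars\circ\map(\hatS_{t,k})-\Smem_{t,k+1}$, the spectral bounds on $B$, and a final conditional expectation. No gaps.
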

\begin{proof}
Since $\lyap$ is continuously differentiable with $L_{\dot
  \lyap}$-Lipschitz gradient, then for any $s,s' \in \rset^q$,
\[
\lyap(s') - \lyap(s) \leq \pscal{\nabla \lyap(s)}{s'-s} +\frac{L_{\dot  \lyap}}{2} \|s'-s\|^2 \eqsp.
\]
Set $s' = s + pas S$ where $\pas>0$ and $S \in \rset^q$.  Since
$\nabla \lyap(s) = -B(s) h(s)$ and $B(s)$ is symmetric, apply
Lemma~\ref{lem:majo:gradient} with $h \leftarrow h(s)$,
$B \leftarrow B(s)$ and $S = (s'-s)/\pas$; this yields
\begin{align*}
  &\lyap(s+\pas S) - \lyap(s) \leq - \frac{\pas}{2} \pscal{B(s) S}{S} -
  \frac{\pas}{2} \pscal{B(s) h(s)}{h(s)}\\ &+ \frac{\pas}{2}
  \pscal{B(s) \{h(s) - S\}}{h(s) - S} +\frac{L_{\dot  \lyap}}{2} \pas^2
  \|S\|^2 \eqsp.
  \end{align*}
Since $ \|a\|^2 v_{\min} \leq \pscal{B(s)a}{a} \leq v_{\max} \|a\|^2$
for any $a \in \rset^q$, we have
\begin{align*}
  &\lyap(s+\pas S) - \lyap(s) \leq - \frac{\pas v_{\min}}{2} \|S\|^2 -
  \frac{\pas v_{\min}}{2} \|h(s) \|^2 \\ &+ \frac{\pas v_\max}{2} \|
  h(s) - S\|^2 +\frac{L_{\dot  \lyap}}{2} \pas^2 \|S\|^2 \eqsp.
  \end{align*}
  Applying this inequality with $s \leftarrow \hatS_{t,k}$,
  $\pas \leftarrow \pas_{t,k+1}$,
  $S \leftarrow \Smem_{t,k+1} - \hatS_{t,k}$ (which yields
  $s+\pas S = \hatS_{t,k+1}$), and then the conditional expectation
  yield the result.
\end{proof}
\begin{proposition}
 \label{prop:lyap:DL:init}
  Assume A\ref{hyp:model} to A\ref{hyp:regV}. For any $t \in [\kouter]^\star$
\begin{align*}
  &\lyap(\hatS_{t+1,0}) - \lyap(\hatS_{t+1,-1})\\ &\leq -
  \frac{\pas_{t+1,0} v_{\min}}{2} \|h(\hatS_{t+1,-1})\|^2 +
  \frac{v_\max \pas_{t+1,0}}{2} \|\mathcal{E}_{t+1}\|^2 \\ & -
  \frac{\pas_{t+1,0}}{2} \left(v_{\min} - \pas_{t+1,0} L_{\dot
    \lyap} \right) \|\Smem_{t+1,0} - \hatS_{t+1,-1}\|^2 \eqsp.
\end{align*}
\end{proposition}
\begin{proof}
  As in the proof of Proposition~\ref{prop:lyap:DL}, we write for any $s,s' \in \rset^q$,
\[
\lyap(s') - \lyap(s) \leq \pscal{\nabla \lyap(s)}{s'-s} +\frac{L_{\dot  \lyap}}{2} \|s'-s\|^2 \eqsp.
\]
With Lemma~\ref{lem:majo:gradient}, this yields when $s' = s + \pas S$ for $\pas >0$ and $S \in \rset^q$
\begin{align*}
  \lyap(s+\pas S) - \lyap(s) & \leq - \frac{\pas }{2} \left(v_{\min} - \pas
                               L_{\dot  \lyap} \right) \|S\|^2 - \frac{\pas v_{\min}}{2} \|h\|^2
  \\ & + \frac{v_\max \pas}{2} \|h -S\|^2 \eqsp.
\end{align*}
Apply this inequality with $\pas \leftarrow \pas_{t+1,0}$,
$s \leftarrow \hatS_{t+1,-1}$,
$S \leftarrow \Smem_{t+1,0} - \hatS_{t+1,-1}$ and $h \leftarrow h(s)$
(which yields $s+ \pas S = \hatS_{t+1,0}$).
  \end{proof}

\begin{theorem}
  \label{theo:tout}
  Assume A\ref{hyp:model} to A\ref{hyp:regV}. For any
  $t \in [\kouter]^\star$, let $\rho_{t} \in \ooint{0,1}$ and
  $\Xi_t \sim\mathcal{G}^\star(1-{\rho_t})$. Finally, choose
  $\pas_{t,k+1}= \pas_t>0$ for any $k \geq 0$.  For any
  $t \in [\kouter]^\star$, \begin{align*} & \frac{v_{\min}
      \pas_t}{2(1-\rho_t)} \bPE_t\left[ \|h(\hatS_{t,\Xi_t-1})\|^2
      \vert \F_{t,0} \right] \leq \lyap(\hatS_{t,0}) -
    \bPE_t\left[\lyap(\hatS_{t,\Xi_t}) \vert \F_{t,0} \right] \\ &+
    \frac{v_\max}{2(1-\rho_t)} \frac{L^2 }{\lbatch} \pas_{t}
    \pas_{t,0}^2 \|\Smem_{t,0} - \hatS_{t,-1}\|^2 +
    \frac{v_\max}{2(1-\rho_t)} \pas_{t,1} \|\mathcal{E}_t\|^2 \\ & -
    \frac{\pas_{t}}{2(1-\rho_t)} \left( v_{\min} - \pas_{t} L_{\dot
        \lyap} - \frac{v_\max L^2 \rho_t }{(1-\rho_t)
        \lbatch}\pas_t^2\right) \cdots \\ & \qquad \times \bPE_t\left[
      \|\Smem_{t,\Xi_t} - \hatS_{t,\Xi_t-1} \|^2 \vert \F_{t,0}
    \right] \eqsp.
   \end{align*}
\end{theorem}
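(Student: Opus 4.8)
The plan is to propagate the one-step descent inequality of Proposition~\ref{prop:lyap:DL} across a full epoch and to use the geometric law of $\Xi_t$ to collapse the telescoping structure into a single Lyapunov difference. First I would specialize Proposition~\ref{prop:lyap:DL} to the constant step size $\pas_{t,k+1}=\pas_t$, so that for every inner index $k\geq 0$, working throughout under $\PE[\,\cdot\mid\F_{t,0}]$, the conditional descent reads: $\lyap(\hatS_{t,k+1})$ plus $\tfrac{v_{\min}}{2}\pas_t\|h(\hatS_{t,k})\|^2$ is dominated by $\lyap(\hatS_{t,k})$, plus the bias term $\tfrac{v_\max}{2}\pas_t\|\Smem_{t,k+1}-\bars\circ\map(\hatS_{t,k})\|^2$, minus $\tfrac{\pas_t}{2}(v_{\min}-\pas_t L_{\dot\lyap})\|\Smem_{t,k+1}-\hatS_{t,k}\|^2$. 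I would then apply $\bPE_t[\,\cdot\mid\F_{t,0}]$ to this inequality with the index shifted so that the terminal iterate is $\hatS_{t,\Xi_t}$, i.e.\ at $k=\Xi_t-1$.

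The crucial step is that the two Lyapunov contributions combine through Lemma~\ref{lem:stopping:geom}. Taking $D_k=\PE[\lyap(\hatS_{t,k})\mid\F_{t,0}]$ and noting $D_0=\lyap(\hatS_{t,0})$ since $\hatS_{t,0}$ is $\F_{t,0}$-measurable, that lemma yields $\bPE_t[\lyap(\hatS_{t,\Xi_t-1})\mid\F_{t,0}]-\bPE_t[\lyap(\hatS_{t,\Xi_t})\mid\F_{t,0}]=(1-\rho_t)\bigl(\lyap(\hatS_{t,0})-\bPE_t[\lyap(\hatS_{t,\Xi_t})\mid\F_{t,0}]\bigr)$. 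Dividing the resulting inequality by $(1-\rho_t)$ then produces exactly the difference $\lyap(\hatS_{t,0})-\bPE_t[\lyap(\hatS_{t,\Xi_t})\mid\F_{t,0}]$ on the right and the prefactor $v_{\min}\pas_t/\{2(1-\rho_t)\}$ in front of $\bPE_t[\|h(\hatS_{t,\Xi_t-1})\|^2\mid\F_{t,0}]$ on the left.

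It remains to control the bias term $\bPE_t[\|\Smem_{t,\Xi_t}-\bars\circ\map(\hatS_{t,\Xi_t-1})\|^2\mid\F_{t,0}]$, which is precisely what Corollary~\ref{coro:squarederror:geom} supplies. Specializing it to $\pas_{t,k+1}=\pas_t$ (so that $\pas_{t,\Xi_t}=\pas_{t,\Xi_t+1}=\pas_{t,1}=\pas_t$ and the prefactor $\pas_{t,\Xi_t}-\rho_t\pas_{t,\Xi_t+1}$ collapses to $(1-\rho_t)\pas_t$) and dividing by $(1-\rho_t)\pas_t$ bounds this bias by $\tfrac{L^2\rho_t\pas_t^2}{(1-\rho_t)\lbatch}\,\bPE_t[\|\Smem_{t,\Xi_t}-\hatS_{t,\Xi_t-1}\|^2\mid\F_{t,0}]+\tfrac{L^2}{\lbatch}\pas_{t,0}^2\|\Smem_{t,0}-\hatS_{t,-1}\|^2+\|\mathcal{E}_t\|^2$. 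Substituting this into the $\tfrac{v_\max\pas_t}{2(1-\rho_t)}$-weighted bias term generates the $\|\mathcal{E}_t\|^2$ and $\|\Smem_{t,0}-\hatS_{t,-1}\|^2$ contributions exactly as stated, together with a nonnegative multiple of $\bPE_t[\|\Smem_{t,\Xi_t}-\hatS_{t,\Xi_t-1}\|^2\mid\F_{t,0}]$; merging the latter with the negative term already present from the descent step assembles the single coefficient $v_{\min}-\pas_t L_{\dot\lyap}-\tfrac{v_\max L^2\rho_t}{(1-\rho_t)\lbatch}\pas_t^2$, which is the claimed $\mathcal{N}_t$-type factor.

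The main obstacle I anticipate is the bookkeeping around the random terminal index: the descent inequality couples $k$ and $k+1$, so one must legitimately evaluate it at the shifted index $\Xi_t-1$ and pair $\bPE_t[\lyap(\hatS_{t,\Xi_t})\mid\F_{t,0}]$ with $\bPE_t[\lyap(\hatS_{t,\Xi_t-1})\mid\F_{t,0}]$ rather than with a genuine telescoping sum over a random number of terms; this is exactly where Lemma~\ref{lem:stopping:geom} does the essential work, viewing the iterates as defined for all inner indices and $\bPE_t$ as the induced geometric weighting. A secondary care point is verifying that the step-size simplifications make the prefactors match precisely, so that after dividing once by $(1-\rho_t)\pas_t$ and once by $(1-\rho_t)$ the two quadratic-error terms share the same scalar and can be combined into a single coefficient.
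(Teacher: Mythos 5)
Your proposal is correct and follows essentially the same route as the paper's proof: apply Proposition~\ref{prop:lyap:DL} at the shifted index $k=\Xi_t-1$ under the geometric weighting $\bPE_t$, use Lemma~\ref{lem:stopping:geom} to convert $\bPE_t[\lyap(\hatS_{t,\Xi_t-1})\mid\F_{t,0}]-\bPE_t[\lyap(\hatS_{t,\Xi_t})\mid\F_{t,0}]$ into $(1-\rho_t)\bigl(\lyap(\hatS_{t,0})-\bPE_t[\lyap(\hatS_{t,\Xi_t})\mid\F_{t,0}]\bigr)$, and control the bias term via Corollary~\ref{coro:squarederror:geom} specialized to constant step sizes before merging the two quadratic terms. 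The only difference is cosmetic (you divide by $(1-\rho_t)$ earlier than the paper does), and your prefactor bookkeeping checks out.
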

\begin{proof}
Apply Proposition~\ref{prop:lyap:DL} with $k \leftarrow \xi_t-1$ and
then set $\xi_t \leftarrow \Xi_t$; this yields
 \begin{align*}
    &\bPE_t\left[\lyap(\hatS_{t,\Xi_t}) \vert \F_{t,0} \right] +
   \frac{v_{\min}}{2}  \bPE_t\left[
   \pas_{t,\Xi_t}  \|h(\hatS_{t,\Xi_t-1})\|^2 \vert \F_{t,0} \right] \\ & \leq
   \bPE_t\left[\lyap(\hatS_{t,\Xi_t-1}) \vert \F_{t,0} \right] \\ &+
   \frac{v_\max}{2}  \bPE_t\left[ \pas_{t,\Xi_t}\|\Smem_{t,\Xi_t} - \bars
     \circ \map(\hatS_{t,\Xi_t-1}) \|^2 \vert \F_{t,0} \right] \\ & -
    \bPE_t\left[\frac{\pas_{t,\Xi_t}}{2} \left( v_{\min} - \pas_{t,\Xi_t} L_{\dot
     \lyap}\right) \|\Smem_{t,\Xi_t} - \hatS_{t,\Xi_t-1} \|^2 \vert
     \F_{t,0} \right] \eqsp.
 \end{align*}
 Since $\Xi_t \geq 1$ and $\pas_{t,k} = \pas_t$ for any $k\geq 1$, we have
 \begin{align*}
    &\bPE_t\left[\lyap(\hatS_{t,\Xi_t}) \vert \F_{t,0} \right] +
   \frac{v_{\min}}{2} \pas_{t} \bPE_t\left[ \|h(\hatS_{t,\Xi_t-1})\|^2
     \vert \F_{t,0} \right] \\ & \leq
   \bPE_t\left[\lyap(\hatS_{t,\Xi_t-1}) \vert \F_{t,0} \right] \\ &+
   \frac{v_\max}{2} \pas_{t} \bPE_t\left[ \|\Smem_{t,\Xi_t} - \bars
     \circ \map(\hatS_{t,\Xi_t-1}) \|^2 \vert \F_{t,0} \right] \\ & -
   \frac{\pas_{t}}{2} \left( v_{\min} - \pas_{t} L_{\dot
     \lyap}\right) \bPE_t\left[ \|\Smem_{t,\Xi_t} - \hatS_{t,\Xi_t-1}
     \|^2 \vert \F_{t,0} \right] \eqsp.
 \end{align*}
 By Lemma~\ref{lem:stopping:geom},  it holds
 \begin{align*}
 & \bPE_t\left[\lyap(\hatS_{t,\Xi_t}) \vert \F_{t,0} \right] =
 \bPE_t\left[\lyap(\hatS_{t,\Xi_t-1}) \vert \F_{t,0} \right] \\ &+
 (1-\rho_t) \left( \bPE_t\left[\lyap(\hatS_{t,\Xi_t}) \vert \F_{t,0}
   \right] - \lyap(\hatS_{t,0}) \right)
 \end{align*}
 Furthermore, by Corollary~\ref{coro:squarederror:geom} applied with
 $\pas_{t,\Xi_t} = \pas_{t,\Xi_t+1} = \pas_t$
 \begin{align*}
& \left( 1 - \rho_t\right) \pas_{t} \bPE_t\left[ \|\Smem_{t,\Xi_t} -
     \bars \circ \map(\hatS_{t,\Xi_t-1}) \|^2 \vert \F_{t,0} \right]
   \\ & \leq \frac{L^2 \rho_t}{\lbatch} \pas_{t}^3 \bPE_t\left[
     \|\Smem_{t,\Xi_t} - \hatS_{t,\Xi_t-1}\|^2 \vert \F_{t,0} \right]
   \\ & + \frac{L^2 (1-\rho_t)}{\lbatch} \pas_{t} \pas_{t,0}^2
   \|\Smem_{t,0} - \hatS_{t,-1}\|^2 + (1-\rho_t) \pas_{t,1}
   \|\mathcal{E}_t\|^2 \eqsp,
   \end{align*}
 Therefore,
 \begin{align*}
 & \frac{v_{\min}}{2} \pas_{t} \bPE_t\left[
     \|h(\hatS_{t,\Xi_t-1})\|^2 \vert \F_{t,0} \right] \\ & \leq
   (1-\rho_t) \lyap(\hatS_{t,0}) - (1-\rho_t)
   \bPE_t\left[\lyap(\hatS_{t,\Xi_t}) \vert \F_{t,0} \right] \\ &+
   \frac{v_\max}{2} \frac{L^2 \rho_t}{(1-\rho_t)\lbatch} \pas_{t}^3
   \bPE_t\left[ \|\Smem_{t,\Xi_t} - \hatS_{t,\Xi_t-1}\|^2 \vert \F_{t,0}
     \right] \\ &+ \frac{v_\max}{2} \frac{L^2 }{\lbatch} \pas_{t}
   \pas_{t,0}^2 \|\Smem_{t,0} - \hatS_{t,-1}\|^2 + \frac{v_\max}{2}
   \pas_{t} \|\mathcal{E}_t\|^2 \\ & - \frac{\pas_{t}}{2} \left(
   v_{\min} - \pas_{t} L_{\dot  \lyap}\right) \bPE_t\left[
     \|\Smem_{t,\Xi_t} - \hatS_{t,\Xi_t-1} \|^2 \vert \F_{t,0} \right]
   \eqsp.
 \end{align*}

  This concludes the proof.
\end{proof}
\begin{corollary}[of Theorem~\ref{theo:tout}]
  For any $t \in [\kouter]^\star$,
 \begin{align*}
 & \left( \frac{\pas_t\rho_t}{(1-\rho_t)} +\pas_{t+1,0}\right) \frac{
     v_{\min}}{2} \bPE_t\left[\|h(\hatS_{t,\Xi_t})\|^2 \vert \F_{t,0}
     \right] \\ & \leq \lyap(\hatS_{t,0}) - \bPE_t\left[
     \lyap(\hatS_{t+1,0}) \vert \F_{t,0} \right] \\ & -
   \frac{\pas_{t+1,0}}{2} \left(v_{\min} - \pas_{t+1,0} L_{\dot
     \lyap} \right) \PE\left[\| \Smem_{t+1,0} -
     \hatS_{t+1,-1}\|^2\vert \F_{t,0} \right] \\ &+
   \frac{v_\max}{2(1-\rho_t)} \frac{L^2 }{\lbatch} \pas_{t}
   \pas_{t,0}^2 \|\Smem_{t,0} - \hatS_{t,-1}\|^2 \\ & +
   \frac{v_\max}{2(1-\rho_t)} \pas_{t} \|\mathcal{E}_t\|^2 +
   \frac{v_{\max} \pas_{t+1,0}}{2} \PE\left[\|
     \mathcal{E}_{t+1}\|^2 \vert \F_{t,0} \right] \eqsp.
   \end{align*}
\end{corollary}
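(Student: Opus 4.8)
The plan is to obtain the Corollary by chaining three already-established facts: the per-epoch bound of Theorem~\ref{theo:tout}, the geometric index-shift identity of Lemma~\ref{lem:stopping:geom}, and the one-step reset descent of Proposition~\ref{prop:lyap:DL:init}, which governs the epoch transition $\hatS_{t,\Xi_t}=\hatS_{t+1,-1}\mapsto\hatS_{t+1,0}$. The only genuinely new work is (i) shifting the mean-field index from $\Xi_t-1$ to $\Xi_t$ and (ii) folding in the extra descent created at the start of epoch $t+1$.

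First I would start from Theorem~\ref{theo:tout}, noting $\pas_{t,1}=\pas_t$. Under the step-size condition that makes $v_{\min}-\pas_t L_{\dot \lyap}-v_\max L^2 \rho_t \pas_t^2/((1-\rho_t)\lbatch)\ge 0$, the last term of that theorem is nonpositive and may be discarded, leaving
\[
\frac{v_{\min}\pas_t}{2(1-\rho_t)} \bPE_t[\|h(\hatS_{t,\Xi_t-1})\|^2 \vert \F_{t,0}] \leq \lyap(\hatS_{t,0}) - \bPE_t[\lyap(\hatS_{t,\Xi_t}) \vert \F_{t,0}] + R_t,
\]
where $R_t$ collects the two terms $\frac{v_\max}{2(1-\rho_t)}\frac{L^2}{\lbatch}\pas_t\pas_{t,0}^2\|\Smem_{t,0}-\hatS_{t,-1}\|^2$ and $\frac{v_\max}{2(1-\rho_t)}\pas_t\|\mathcal{E}_t\|^2$. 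Then I would apply Lemma~\ref{lem:stopping:geom} with $D_k=\|h(\hatS_{t,k})\|^2$ to get $\bPE_t[\|h(\hatS_{t,\Xi_t-1})\|^2\vert\F_{t,0}]=\rho_t\bPE_t[\|h(\hatS_{t,\Xi_t})\|^2\vert\F_{t,0}]+(1-\rho_t)\|h(\hatS_{t,0})\|^2$; discarding the nonnegative $\frac{v_{\min}\pas_t}{2}\|h(\hatS_{t,0})\|^2$ on the left yields a clean bound on $\frac{v_{\min}\pas_t\rho_t}{2(1-\rho_t)}\bPE_t[\|h(\hatS_{t,\Xi_t})\|^2\vert\F_{t,0}]$ by the same right-hand side.

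Next I would invoke Proposition~\ref{prop:lyap:DL:init} with the identity $\hatS_{t+1,-1}=\hatS_{t,\Xi_t}$, so that $\|h(\hatS_{t+1,-1})\|^2=\|h(\hatS_{t,\Xi_t})\|^2$. Rearranging that inequality produces a lower bound for $\lyap(\hatS_{t,\Xi_t})$ in terms of $\lyap(\hatS_{t+1,0})$, $\frac{\pas_{t+1,0}v_{\min}}{2}\|h(\hatS_{t,\Xi_t})\|^2$, $\|\mathcal{E}_{t+1}\|^2$ and $\|\Smem_{t+1,0}-\hatS_{t+1,-1}\|^2$. Taking $\bPE_t[\cdot\vert\F_{t,0}]$ and substituting for $-\bPE_t[\lyap(\hatS_{t,\Xi_t})\vert\F_{t,0}]$ in the previous display introduces an extra $-\frac{\pas_{t+1,0}v_{\min}}{2}\bPE_t[\|h(\hatS_{t,\Xi_t})\|^2\vert\F_{t,0}]$ on the right; moving it to the left merges with the existing mean-field term into the coefficient $\frac{v_{\min}}{2}\left(\frac{\pas_t\rho_t}{1-\rho_t}+\pas_{t+1,0}\right)$, which is exactly the factor in the statement, while the surviving reset terms assemble into the stated $\mathcal{E}_{t+1}$ and $\Smem_{t+1,0}-\hatS_{t+1,-1}$ contributions.

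The only delicate point, rather than a deep obstacle, is the consistent handling of the expectation operators across the epoch boundary: since $\hatS_{t+1,-1}=\hatS_{t,\Xi_t}$ depends on the geometric length $\Xi_t$, the quantities $\Smem_{t+1,0}$, $\mathcal{E}_{t+1}$ and $\hatS_{t+1,0}$ are all functions of $\Xi_t$ together with the fresh randomness at the reset. Proposition~\ref{prop:lyap:DL:init} holds pathwise, so it may be applied for each realization before averaging; the geometric average over $\Xi_t$ is absorbed into $\bPE_t[\cdot\vert\F_{t,0}]$ for the $\lyap$ and $\|h\|^2$ terms, whereas the reset terms are written with $\PE[\cdot\vert\F_{t,0}]$. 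Keeping this bookkeeping straight, and confirming that the step-size condition indeed renders the discarded term nonpositive, is all that the argument requires.
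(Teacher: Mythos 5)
Your proof follows the same route as the paper's own: start from Theorem~\ref{theo:tout}, shift the index via $\bPE_t\left[\|h(\hatS_{t,\Xi_t-1})\|^2 \vert \F_{t,0}\right] \geq \rho_t \, \bPE_t\left[\|h(\hatS_{t,\Xi_t})\|^2 \vert \F_{t,0}\right]$ from Lemma~\ref{lem:stopping:geom}, and absorb the epoch transition with Proposition~\ref{prop:lyap:DL:init} applied pathwise through $\hatS_{t+1,-1}=\hatS_{t,\Xi_t}$, with the same $\bPE_t$-versus-$\PE$ bookkeeping. The only divergence is that you invoke an explicit step-size condition to discard the term $\mathcal{N}_t\,\bPE_t\left[\|\Smem_{t,\Xi_t}-\hatS_{t,\Xi_t-1}\|^2 \vert \F_{t,0}\right]$ inherited from Theorem~\ref{theo:tout}; the paper's corollary omits this term without comment, so your handling is the more defensible one --- without such a condition the term must be retained on the right-hand side for the stated inequality to hold.
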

\begin{proof}
 Let $t \in [\kouter]^\star$. By Proposition~\ref{prop:lyap:DL:init},
 since $\hatS_{t,\xi_t} = \hatS_{t+1,-1}$ we have
\begin{align*}
  & - \PE\left[\lyap(\hatS_{t,\xi_t}) \vert \F_{t,0} \right] \leq -
  \PE\left[ \lyap(\hatS_{t+1,0}) \vert \F_{t,0} \right] \\ & -
  \frac{\pas_{t+1,0} v_{\min}}{2} \PE\left[\|h(\hatS_{t,\xi_t})\|^2
    \vert \F_{t,0} \right]  \\
  & + \frac{v_{\max} \pas_{t+1,0}}{2}
  \PE\left[\| \mathcal{E}_{t+1}\|^2 \vert \F_{t,0} \right] \\ & -
  \frac{\pas_{t+1,0}}{2} \left(v_{\min} - \pas_{t+1,0} L_{\dot
    \lyap} \right) \PE\left[\|\Smem_{t+1,0} - \hatS_{t+1,-1}\|^2\vert
    \F_{t,0} \right] \eqsp.
\end{align*} 
The previous inequality remains true when
$\PE\left[\lyap(\hatS_{t,\xi_t}) \vert \F_{t,0} \right]$ is replaced
with $\bPE_t\left[\lyap(\hatS_{t,\Xi_t}) \vert \F_{t,0} \right]$; and
$\PE\left[\|h(\hatS_{t,\xi_t})\|^2 \vert \F_{t,0} \right] =
\bPE_t\left[\|h(\hatS_{t,\Xi_t})\|^2 \vert \F_{t,0} \right]$.
The proof follows from Theorem~\ref{theo:tout}, and (see
Lemma~\ref{lem:stopping:geom})
\[
\bPE_t\left[ \|h(\hatS_{t,\Xi_t-1})\|^2 \vert \F_{t,0} \right] \geq
\rho_t \bPE_t\left[ \|h(\hatS_{t,\Xi_t})\|^2 \vert \F_{t,0} \right] \eqsp.
\]
\end{proof}

\end{document}